\documentclass[12pt, pdftex]{article} 

\usepackage{times}
\usepackage{fullpage}
\usepackage{url}
\usepackage[utf8]{inputenc}
\usepackage[T1]{fontenc}

\usepackage{graphicx}
\graphicspath{{./image/}}

\usepackage{hyperref}
\hypersetup{colorlinks=true, citecolor=blue, anchorcolor=blue, linkcolor=blue,
    filecolor=blue, menucolor=blue, urlcolor=blue, plainpages=false,
    pdfpagelabels}

\usepackage[numbers]{natbib}
\renewcommand{\cite}{\citep}
\usepackage{enumerate}
\usepackage{amsmath}
\usepackage{amsfonts}
\usepackage{amssymb}
\usepackage{amsthm}
\usepackage{mathtools}
\usepackage{subfig}
\usepackage{booktabs}
\usepackage[vlined, ruled]{algorithm2e}

\usepackage{tikz}
\newcommand*\circled[1]{\tikz[baseline=(char.base)]{
                    \node[shape=circle,draw,inner sep=0.5pt] (char) {#1};}}

\def \epsilon{\varepsilon}
\let\tilde\widetilde
\let\hat\widehat

\newcommand{\abs}[1]{|#1|}

\newcommand{\norm}[1]{\left\|#1\right\|}

\newcommand{\set}[1]{\left\{#1\right\}}
\newcommand{\T}{\top}

\DeclarePairedDelimiter\ip{\langle}{\rangle}

\newcommand{\C}{\mathcal{C}}
\newcommand{\R}{\mathbb{R}}

\newcommand{\A}{\mathcal{A}}
\newcommand{\tS}{\mathcal{S}}
\newcommand{\PP}{\mathcal{P}}

\renewcommand{\r}{r}

\newcommand{\overbar}[1]{\mkern 1.5mu\overline{\mkern-2.5mu#1\mkern-1mu}\mkern 1.5mu}
\newcommand{\Zbar}{\overbar{Z}}

\newcommand{\xbar}{\bar{x}}
\newcommand{\Ztilde}{\tilde{Z}}

\newcommand{\Zstar}{Z^\star}
\newcommand{\Xstar}{X^\star}
\newcommand{\Ystar}{Y^\star}

\newcommand{\Ustar}{U^\star}
\newcommand{\Vstar}{V^\star}
\newcommand{\Sstar}{\Sigma^\star}
\newcommand{\Sstarhalf}{{\Sigma^\star}^\frac{1}{2}}
\newcommand{\Shalf}{\Sigma^\frac{1}{2}}
\newcommand{\sstar}{\sigma^\star}

\DeclareMathOperator{\rank}{\text{rank}}
\DeclareMathOperator{\trace}{\text{tr}}
\DeclareMathOperator*{\argmin}{arg\,min}

\def\OmegaY{\underline{\Omega}}
\def\nmin{n_1 \land n_2}

\newtheorem{theorem}{Theorem}
\newtheorem{lemma}{Lemma}
\newtheorem{corollary}{Corollary}
\newtheorem{defn}{Definition}

\title{\Large\bf 
Convergence Analysis for Rectangular Matrix Completion Using
Burer-Monteiro Factorization and Gradient Descent}

\author{
\normalsize Qinqing Zheng \quad John Lafferty\\
\normalsize University of Chicago \\
}
\date{\normalsize \today}

\begin{document}
\maketitle

\begin{abstract}
    We address the rectangular matrix completion problem by lifting the unknown
    matrix to a positive semidefinite matrix in higher dimension, 
    and optimizing a nonconvex objective over the semidefinite factor
    using a simple gradient descent scheme.
    With $O( \mu r^2 \kappa^2 n \max(\mu, \log n))$ random observations of a $n_1 \times n_2$
    $\mu$-incoherent matrix of rank $r$ and condition number $\kappa$, where $n = \max(n_1, n_2)$, 
    the algorithm linearly converges to the global optimum with
    high probability.
\end{abstract}

\section{Introduction}
\label{sec:intro}

A growing body of recent research is shedding new light on the role of
nonconvex optimization for tackling large scale problems in machine
learning, signal processing, and convex programming.  This work is
developing techniques that help to explain the surprising
effectiveness of relatively simple first-order algorithms for certain
nonconvex optimizations.

When applied to problems that can be formulated as semidefinite
programs, these techniques can often be viewed as part of a framework
proposed by \citet{BurMon03}.  The Burer-Monteiro technique is based
on factoring the semidefinite variable, and applying classical
optimization techniques to the resulting nonconvex objective over the
factor. While worst-case complexity considerations imply that such an
approach cannot succeed in general, a series of recent
papers~\cite{CanLiSol14,ZheLaf15, TuBocSim16, CheWai15, BhoKyrSan15}
has shown the strategy to be remarkably effective for a number of
problems of practical interest, with analytical convergence guarantees
and strong empirical performance.

In this paper, we enlarge the collection of problems
to which the Burer-Monteiro technique can be successfully applied, by
analyzing the convergence properties of gradient descent applied to
the problem of rectangular matrix completion from incomplete
measurements.  The standard matrix completion problem asks for
the recovery of a low rank matrix 
$\Xstar \in \R^{n_1 \times n_2}$ given only a small fraction 
of observed entries.  Let $\Omega$ be the set of $m$ indices 
of the observed entries. Fixing a target rank $r \ll \min(n_1, n_2)$,
the natural, but nonconvex objective is
\begin{equation}
    \label{eq:opt_rank_min}
    \begin{aligned}
        & \min_{X \in \R^{n_1 \times n_2}} && \rank(X) \\
        & \text{subject to} && X_{ij} = \Xstar_{ij}, \; (i,j) \in \Omega.
    \end{aligned}
\end{equation}
In order for this problem to be well-posed, it is important
to understand when $\Xstar$ is identifiable and, in particular, 
the unique minimizer of \eqref{eq:opt_rank_min}.
Moreover, because the problem is in general NP-hard, 
it is essential to identify tractable families of instances,
together with efficient algorithms 
having global convergence guarantees.

In the current work, we apply the factorization technique
by ``lifting'' the matrix $\Xstar$ to a positive semidefinite matrix $Y^\star \in \R^{(n_1 + n_2) \times
    (n_1 + n_2)}$ in higher dimension. 
Lifting is an established method that recasts 
vector or matrix estimation problems in terms of positive semidefinite matrices with
special structure.  It has been applied to sparse eigenvector approximation \cite{sparse_pca_sdp} and
phase retrieval \cite{phaselift_1}, where the lifted matrix is of rank
one. 
As explained in detail below, we can construct $Y^\star$ to be of the same rank as
$\Xstar$, thus obtaining a factorization
$Y^\star = \Zstar {\Zstar}^\T$ for some $\Zstar \in \R^{(n_1 + n_2)
  \times r}$, and transforming the original matrix completion problem
into the problem of recovering the semidefinite factor $\Zstar$.
We formulate this as minimizing a nonconvex objective $f(Z)$, to which
we apply a gradient descent scheme, using a particular spectral
initialization. Our analysis of this algorithm establishes a lower
bound on the number of matrix measurements that are sufficient to 
guarantee identifiability of the true matrix and
geometric convergence of the gradient descent algorithm,
with explicit bounds on the rate.

In the following section we give a full description of our approach.  
Our theoretical results are presented in Section~\ref{sec:rst}, with detailed proofs contained in
the appendix. Our analysis subsumes the case where $\Xstar$ is positive
semidefinite. In Section~\ref{sec:related} we briefly review related
work.  The experimental results are presented in
Section~\ref{sec:expr}, and we conclude with a brief discussion of
future work in Section~\ref{sec:conclude}.


\def\sfrac#1#2{{#1}/{#2}}

\section{Semidefinite Lifting, Factorization, and Gradient Descent}
\label{sec:lift}

For any $(n_1 + n_2) \times r$ matrix $Z$, we will use $Z_{(i)}$ to denote its
$i$th row, and $Z_U$ and $Z_V$ to denote the top $n_1$ and bottom $n_2$
rows.  The operator, Frobenius and $\ell_\infty$ norm of matrices are
denoted by $\norm{\cdot}$, $\norm{\cdot}_F$ and $\norm{\cdot}_\infty$,
respectively. We define $\norm{Z}_{2, \infty} = \max_{i}
\norm{Z_{(i)}}_2$ as the largest $\ell_2$ norm of its rows, and
similarly $\norm{Z}_{\infty, 2} = \max \set{ \norm{Z}_{2, \infty},
  \norm{Z^\T}_{2, \infty} }$.  
Let $\PP_\Omega: \R^{n_1 \times n_2} \rightarrow \R^{n_1 \times n_2}$ be the operator where 
\begin{equation}
    \label{eq:PP_Omega}
    \PP_\Omega(X)_{ij} =
                \begin{cases}
                    X_{ij} & \text{if}\; (i,j) \in \Omega, \\
                    0 & \text{otherwise}.
                \end{cases}
\end{equation}

In this paper, we focus on completing an incoherent or
``non-spiky'' matrix $\Xstar$. 
With $\Ustar \Sstar \Vstar$ denoting the
rank-$r$ SVD of $\Xstar$, we assume $\Xstar$ is $\mu$-\textit{incoherent}, as
defined below.
\begin{defn}
    \label{defn:incoh}
The matrix $\Xstar$ is $\mu$-incoherent with respect to the canonical
basis if its singular vectors satisfy
\begin{equation}
    \label{eq:defn_incoh}
\norm{\Ustar}_{2, \infty} \leq \sqrt{\frac{\mu r}{n_1}},
 \quad \norm{\Vstar}_{2, \infty} \leq \sqrt{\frac{\mu r}{n_2}},
\end{equation}
where $\mu$ is a constant.\footnote{Note that $\mu \geq 1$, since
$r = \|\Ustar \|^2_F = \sum_{i \in [n_1]} \norm{\Ustar_{(i)}}^2_2 \leq \mu r$.}
\end{defn}
Our main interest is the uniform model where $m$ entries of $\Xstar$ are
observed uniformly at random, though we shall analyze a Bernoulli sampling model, where each
entry of $\Xstar$ is observed with probability $p = \sfrac{m}{n_1 n_2}$. One can
transfer the results back to the uniform model, as the probability
of failure under the
uniform model is at most twice that under the Bernoulli model; see
\cite{CanRec09, CanTao10}.

Using the rank-$r$ SVD of $\Xstar$, we can lift $\Xstar$ to
\begin{equation}
    \label{eq:lift}
     Y^\star = \begin{bmatrix} \Ustar \Sstar {\Ustar}^\T & \Xstar \\ {\Xstar}^\T
     & \Vstar \Sstar {\Vstar}^\T \end{bmatrix} = \Zstar {\Zstar}^\T, 
    \quad \text{where} \;
    \Zstar = \begin{bmatrix} \Ustar \\ \Vstar\end{bmatrix} {\Sstar}^\frac{1}{2}.
\end{equation}
The symmetric decomposition of $\Ystar$ is not unique; our goal is to find 
a matrix in the set
\begin{equation}
    \tS = \set{\Ztilde \in \R^{(n_1 + n_2)\times \r} \; | \; \Ztilde = \Zstar
        R \;\; \text{for some $R$ with $RR^\T = R^\T R = I$}},
\end{equation}
since for any $\Ztilde \in \tS$ we have $\Xstar = \Ztilde_U {\Ztilde_V}^\T$.
Let $\underline{\Omega}$ denote the corresponding observed entries of $\Ystar$,
and consider minimization of the squared error
\begin{equation}
    \label{eq:prob_after_lift}
        \min_{Z} \; \frac{1}{2p} \sum_{(i,j) \in \OmegaY} ( ZZ^\T_{ij} -
        \Ystar_{ij} )^2 = \min_{Z} \frac{1}{2p} \norm{ \PP_{\OmegaY} ( ZZ^\T - \Ystar )
        }^2_F.
\end{equation}
Note that $\Ystar$ is not the unique minimizer of
\eqref{eq:prob_after_lift}, nor is it the only possible 
positive semidefinite lifting of $\Xstar$. For example, let $P$ be an
$r\times r$ nonsingular
matrix, and form the matrices
\begin{equation}
    Z' = \begin{bmatrix} 
        \Ustar {\Sstar}^\frac{1}{2} P \\
        \Vstar {\Sstar}^\frac{1}{2} P^{-1}
        \end{bmatrix}
          \qquad
    Y' = \begin{bmatrix} 
        \Ustar {\Sstar}^\frac{1}{2} P^2 {\Sstar}^\frac{1}{2}  {\Ustar}^\T  & \Xstar \\
        {\Xstar}^\T & \Vstar {\Sstar}^\frac{1}{2} P^{-2} {\Sstar}^\frac{1}{2}
        {\Vstar}^\T
        \end{bmatrix}.
\end{equation}
Since $\OmegaY$ does not contain any entry in the top-left or bottom-right
block, $Y'$ is also a minimizer of \eqref{eq:prob_after_lift}.
Thus, the solution set of the lifted problem is much larger than the
set $\tS$ of actual interest. For the sake of simple analysis,
we shall focus on exact recovery
of $\Ystar$ only, and thus impose an additional regularizer 
to align the column spaces of $Z_U$ and $Z_V$, as in \cite{TuBocSim16}. The regularized loss is 
\begin{equation}
    \label{eq:obj_reg}
f(Z) = \frac{1}{2p} \norm{ \PP_{\OmegaY} ( ZZ^\T - \Ystar) }^2_F +
\frac{\lambda}{4} \norm{ Z^\T D Z}^2_F,
\quad \text{where\ } D = \begin{bmatrix}  I_{n_1} & 0 \\ 0 & -I_{n_2} \end{bmatrix}.
\end{equation}
While this apparently introduces an extra tuning parameter, our analysis
establishes linear convergence of the projected gradient descent algorithm when $\lambda = \frac{1}{2}$,
and thus one may treat $\lambda$ as a fixed number.

It is discussed in \cite{CheWai15} that one needs to ensure the iterates stay
incoherent. Let $\C$ be the set of incoherent matrices
\begin{equation}
\label{feasset}
\C = \left\{ Z : \norm{Z}_{2, \infty} \leq \sqrt{ \frac{2 \mu r} {n_1 \land n_2} }\norm{Z^0}\right\}
\end{equation}
where we assume $\mu$ is known and $Z^0$ will be determined.

Our algorithm is simply
gradient descent on $f(Z)$, with projection onto $\C$.
Let $M = p^{-1} \PP_\Omega(UV^\T - \Xstar)$.
Then the gradient of $f$ is given by
\begin{equation}
    \begin{aligned}
        \nabla f (Z) & = && \begin{bmatrix}  0 & M \\ M^\T & 0 \\ \end{bmatrix} Z + \lambda D Z Z^\T D Z.
    \end{aligned}
\end{equation}
The projection $\PP_{\C}$ to the feasible set $\C$ has closed form
solution, given by row-wise clipping:
\begin{equation}
    \label{eq:proj_C}
    \PP_{\C} (Z)_{(i)} =
    \begin{cases}
    Z_{(i)} &  \text{if} \; \norm{Z_{(i)}} \leq \sqrt{ \frac{2 \mu r} {\nmin} }\norm{Z^0},\\
       \frac{ Z_{(i)} }{ \norm{ Z_{(i)} } } \cdot \sqrt{ \frac{2 \mu r} {\nmin} }\norm{Z^0} & \text{otherwise.} 
    \end{cases}
\end{equation}

Note that $X^0 \equiv p^{-1} \PP_\Omega(\Xstar)$ is an unbiased estimator
of $\Xstar$ under the Bernoulli model. To initialize, we thus construct $Z^0$ from the top
rank-$r$ factors of $X^0$. This leads to the following algorithm.
\begin{algorithm}[hb]
    \caption{Projected gradient descent for matrix completion}
    \label{alg:gd1}
    \SetKwInOut{Input}{input}
    \Input{ $\Omega$, $\set{ \Xstar_{ij}:  (i,j) \in \Omega }$, $m$, $n_1$, $n_2$, $r$, $\lambda$, $\eta$ }
    \textbf{initialization} \\
    \hspace{0.5cm} $p = m / n_1 n_2$\\
    \hspace{0.5cm} $U^0 \Sigma^0 {V^0}^\T = $ rank-$r$ SVD of
    $p^{-1} \PP_\Omega(\Xstar)$\\
    \hspace{0.5cm} $Z^0 = [U^0 {\Sigma^0}^{\frac{1}{2}}; V^0 {\Sigma^0}^{\frac{1}{2}}]$\\
    \hspace{0.5cm} $Z^1 = \PP_\C(Z^0)$\\
    \hspace{0.5cm} $k \leftarrow  1$\\
    \Repeat {convergence} {
        $M^k = p^{-1} \PP_\Omega( Z^k_U {Z^k}_V^\T - \Xstar)$ \\
        \vspace{3pt}
        $\nabla f(Z^k) = \begin{bmatrix}  0 & M^k \\ {M^k}^\T & 0 \\ \end{bmatrix} Z^k +
        \lambda D Z^k {Z^k}^\T D Z^k.$ \\
        \vspace{3pt}
        $Z^{k+1} = \PP_\C\left( Z^k - \dfrac{\eta}{\norm{Z^0}^2} \nabla f(Z^k) \right)$\\
        $k \leftarrow k+1$
    }
    \SetKwInOut{Output}{output}
    \Output{$\hat{Z} = Z^k, \hat{X} = Z^k_U {Z^k_V}^\T$. }
\end{algorithm}

\textit{Remarks.} (i) The step size $\eta$ is normalized by $\norm{Z^0}^2$. Our analysis 
will establish linear convergence when taking step sizes of the form
$\eta / \sstar_1$, where $\eta$ is a sufficiently small constant. 
We replace $\sstar_1$ by $\norm{Z^0}^2$ in the actual algorithm since it is
unknown in practice. (ii) The feasible set \eqref{feasset}
depends on $\norm{Z^0}$ as well. Under the above spectral initialization, our analysis shows that when $p
\geq O(\sfrac{\mu \kappa^2 r^2 \log n}{\nmin})$, the term $\sqrt{\frac{2 \mu r}{\nmin}
}\norm{Z^0}$ is an upper bound of $\norm{\Zstar}_{2, \infty}$ with
high probability (see Corollary~\ref{coro:proj_const} below). This means $\tS$
is a subset of $\C$. Note that this does not change the global
optimality of $\Zstar$ and its equivalent elements, since $f(\Zstar) = 0$.
In practice, we find that the iterates of our algorithm remain incoherent, so
that one may drop the projection step.
(iii) The column space regularizer \eqref{eq:obj_reg} is needed in our analysis.
We also found that when $\lambda = 0$, our algorithm typically
converges to another PSD lifted matrix of $\Xstar$, with minor difference from $\Ystar$ 
in the top-left and bottom-right blocks.

In the following section we state and sketch a proof of our main
convergence result for this algorithm.

\def\sfrac#1#2{{#1}/{#2}}
\section{Main Result: Convergence Analysis}
\label{sec:rst}

\begin{theorem}
\label{thm:main}
Suppose that $\Xstar$ is of rank $r$,
with condition number $\kappa = \sstar_1/\sstar_r$,  and $\mu$-incoherent as
defined in Definition~\ref{defn:incoh}. Suppose further that we observe $m$ entries of $\Xstar$
chosen uniformly at random.
Let $Y^\star = \Zstar {\Zstar}^\T$ be the lifted matrix as in
\eqref{eq:lift} and write $n = \max(n_1, n_2)$.
Then there exist universal constants $c_0, c_1, c_2, c_3$ such that if 
\begin{equation}
 m \geq c_0 \mu r^2 \kappa^2 \max(\mu, \log n) n, 
\end{equation}
then with probability at least $1 - c_1 n^{-c_2}$ 
the iterates of Algorithm~\ref{alg:gd1} 
converge to $\Zstar$ geometrically,
when using regularization parameter $\lambda = 1/2$, correctly specified input rank
$r$, and constant step size $\eta / \sstar_1$ with $\eta \leq \displaystyle {c_3}/{\mu^2 r^2 \kappa }$.
\end{theorem}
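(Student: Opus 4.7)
The plan is to reduce uniform sampling to Bernoulli sampling as noted in the paper, and then prove geometric convergence under Bernoulli sampling by the by-now standard two-stage template: (i) the spectral initialization $Z^1 = \PP_\C(Z^0)$ lies in a small neighborhood of the minimizer set $\tS$, and (ii) inside this neighborhood each projected gradient step contracts the Procrustes-type distance $d^2(Z,\tS) := \min_{R^\T R = RR^\T = I} \|Z - \Zstar R\|_F^2$ by a multiplicative factor $1 - \Theta(\eta/\kappa)$.

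For stage (i), I would apply a matrix Bernstein inequality to the zero-mean sum $p^{-1}\PP_\Omega(\Xstar) - \Xstar$, using the $\mu$-incoherence of $\Xstar$ to bound its entrywise magnitudes, to obtain $\|p^{-1}\PP_\Omega(\Xstar) - \Xstar\| \leq c\sstar_r/\sqrt{\kappa}$ with high probability under the stated sample complexity. Wedin's $\sin\Theta$ theorem then yields a rotation $R^\star$ aligning the top-$r$ factors so that $d(Z^0,\tS) \lesssim \sqrt{\sstar_r/\kappa}$, while Weyl's inequality gives $\|Z^0\|^2 = \sstar_1(1+o(1))$. With this calibration the clipping radius $\sqrt{2\mu r/\nmin}\,\|Z^0\|$ dominates $\|\Zstar\|_{2,\infty}$ (which is exactly what Corollary~\ref{coro:proj_const} will assert), so $\Zstar R^\star\in\C$, and non-expansiveness of $\PP_\C$ on the convex set $\C$ gives $d(Z^1,\tS)\leq d(Z^0,\tS)$, placing $Z^1$ inside the basin.

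For stage (ii), fix $\Ztilde=\Zstar R^\star$ minimizing $\|Z^k - \cdot\|_F$ over $\tS$, let $\Delta = Z^k - \Ztilde$, and again use non-expansiveness of $\PP_\C$ (together with $\Ztilde\in\C$) to reduce the iterate contraction to a descent inequality on the unprojected step. Expanding $\|Z^k - (\eta/\sstar_1)\nabla f(Z^k) - \Ztilde\|_F^2$ reduces matters to a regularity condition
\begin{equation*}
\ip*{\nabla f(Z^k),\, \Delta} \;\geq\; \alpha\, \sstar_r\, \|\Delta\|_F^2 \;+\; (\beta/\sstar_1)\,\|\nabla f(Z^k)\|_F^2
\end{equation*}
with explicit constants $\alpha,\beta>0$. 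I would split the gradient into its data-fit and regularizer pieces. The data-fit piece is controlled by a tangent-space near-isometry: combining matrix Bernstein with an $\epsilon$-net over incoherent rank-$r$ perturbations, one obtains $|p^{-1}\ip*{\PP_{\OmegaY}(A),B}-\ip*{A,B}|$ small whenever $A,B$ are built from incoherent low-rank factors, which transports the population regularity inequality to its sample version. The regularizer $\tfrac{\lambda}{4}\|Z^\T D Z\|_F^2$ supplies the restoring force $\lambda D Z^k Z^{k\T} D Z^k$ that penalizes the imbalance $Z^\T D Z$; the choice $\lambda = 1/2$ balances its magnitude against the data-fit term so that both contribute of order $\sstar_r\|\Delta\|_F^2$ to the lower bound.

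The main obstacle is the gauge freedom exposed by the $Y'$ construction in Section~\ref{sec:lift}: the data-fit loss alone does not identify $\Ystar$ among the family of valid lifts, so $\tS$ is not locally strongly attractive without the regularizer, while the regularizer alone does not see $\Xstar$. The technical core of the proof will therefore be a \emph{balance lemma} showing that, inside the local neighborhood, $\|Z^\T D Z\|_F^2$ is comparable to the column-space misalignment of $Z_U,Z_V$ relative to $\Ustar,\Vstar$, so that regularizer progress converts into $d(Z,\tS)$ progress of the correct order. A secondary difficulty is realizing the stated $\mu^2 r^2\kappa^2 n\log n$ sample complexity: the restricted-isometry arguments must exploit the $\C$-enforced incoherence of the iterates rather than settle for the weaker $rn\log n$ bounds available for generic low-rank matrices, and the $\kappa$-scaling must be tracked carefully through the initialization radius and the step size $\eta/\sstar_1$.
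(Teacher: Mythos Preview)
Your two-stage template---spectral initialization into a basin, then a regularity condition driving geometric contraction of the projected step---is exactly the paper's route, including the uniform-to-Bernoulli reduction and the use of non-expansiveness of $\PP_\C$. The technical tooling differs in minor ways: the paper does not invoke Wedin for initialization but instead the algebraic inequality $\|Z^0(Z^0)^\T-\Zstar(\Zstar)^\T\|_F\le 2\|U^0\Sigma^0(V^0)^\T-\Xstar\|_F$ (Lemma~\ref{lem:ZZ_to_X}) together with Weyl and a direct Procrustes computation, landing at radius $\tfrac14\sqrt{\sstar_r}$ rather than $\sqrt{\sstar_r/\kappa}$. For the near-isometry it uses the fixed tangent space $T=\{\Ustar X^\T+Y(\Vstar)^\T\}$ and the Cand\`es--Recht bound $\|p^{-1}\PP_T\PP_\Omega\PP_T-\PP_T\|\le\delta$ (Lemma~\ref{lem:Rudelson}) rather than an $\epsilon$-net. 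The $\C$-enforced incoherence of the iterates enters only through the quadratic-in-$H$ term $p^{-1}\|\PP_{\OmegaY}(HH^\T)\|_F^2$, which is controlled via a random-bipartite-graph spectral bound (Lemmas~\ref{lem:op_norm_random_graph}--\ref{lem:PP_HH}); this is where the extra $\mu^2 r^2$ factor arises.

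The piece you should rethink is the role of the regularizer. The data-fit term alone does \emph{not} yield $\sstar_r\|H\|_F^2$ curvature: after the tangent-space isometry one obtains
\[
\tfrac{1}{2p}\bigl\|\PP_{\OmegaY}(H\Zbar^\T+\Zbar H^\T)\bigr\|_F^2 \;\gtrsim\; (1-2\delta)\,\sstar_r\|H\|_F^2 \;+\; 2\ip{H_U\Zbar_V^\T,\,\Zbar_U H_V^\T},
\]
and the cross term $2\ip{H_U\Zbar_V^\T,\Zbar_U H_V^\T}$ can be arbitrarily negative---this is exactly the $Y'$-direction unidentifiability you flagged. The regularizer does not supply independent curvature; its job is to \emph{absorb} that cross term. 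The paper expands $2\ip{H_U\Zbar_V^\T,\Zbar_U H_V^\T}+\lambda\,\trace(H^\T DZZ^\T DZ)$ and finds it equals $\tfrac{\lambda}{2}\|\Zbar^\T DH\|_F^2$ plus lower-order pieces plus $(\lambda-\tfrac12)\trace(H^\T D\Zbar\,H^\T D\Zbar)$; the choice $\lambda=\tfrac12$ is made precisely to kill this last indefinite term. Both the curvature lower bound and the smoothness upper bound then carry an auxiliary $\|\Zbar^\T DH\|_F^2$ term, which is eliminated when they are combined into the final regularity condition (Lemma~\ref{thm:rc}). So rather than a standalone ``balance lemma'' comparing $\|Z^\T DZ\|_F^2$ to a misalignment quantity, you should plan to carry $\|\Zbar^\T DH\|_F^2$ as a second state variable through both estimates and cancel it at the end.
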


We shall analyze the Bernoulli sampling model, as justified in
Section~\ref{sec:lift}. 
Let us define the distance to $\Zstar$ in terms of the solution set $\tS$. 
\begin{defn} Define the distance between $Z$ and $\Zstar$ as
    \[ d(Z, \Zstar) = \min_{\Ztilde \in \tS} \big \| Z - \Ztilde \big \|_F
         = \min_{RR^\T = R^\T R =I} \norm{Z - \Zstar R}_F. \]
\label{def:dist}
\end{defn} 
The next theorem establishes the global convergence of
Algorithm~\ref{alg:gd1}, assuming that the input rank is correctly specified. 
The proof sketch is given in the next subsection.
\begin{theorem}
\label{thm:recovery}
There exist universal constants $c_0, c_1, c_2$ such that if 
$p \geq \dfrac{c_0 \mu r^2 \kappa^2 \log n}{\nmin}$,
with probability at least $1 - c_1 n^{-c_2}$, the initialization $Z^1 \in \C$ satisfies
    \begin{equation}
        \label{eq:rst_init}
            d(Z^1, \Zstar) \leq \frac{1}{4} \sqrt{\sstar_r}.
    \end{equation}
Moreover, there exist universal constants
$c_3,c_4, c_5, c_6$ such that if $p \geq \dfrac{c_3 \max(\mu r^2 \kappa^2  , \mu r \log n)}{\nmin}$,
when using constant step size $\eta / \sstar_1 $
with $\eta \leq \dfrac{c_4}{\mu^2 r^2 \kappa }$ and initial value
$Z^1 \in \C$ obeying \eqref{eq:rst_init}, the $k$th step of Algorithm~\ref{alg:gd1} with
$\lambda = 1/2$ satisfies
\[ d(Z^k, \Zstar) \leq \frac{1}{4} \left(1 - \frac{99}{256} \cdot \frac{\eta}{\kappa} \right)^{k/2} \sqrt{ \sstar_r } \]
with probability at least $1 - c_5n^{-c_6}$.
\end{theorem}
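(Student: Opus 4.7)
The plan is to prove the two halves of the statement---spectral initialization and geometric descent---separately, both under the Bernoulli sampling model (the transfer back to the uniform model is the usual doubling-of-failure argument already invoked in Section~\ref{sec:lift}).

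\emph{Initialization.} I would first control $\|X^0 - \Xstar\|$, where $X^0 = p^{-1}\PP_\Omega(\Xstar)$. A matrix Bernstein inequality, with the $\mu$-incoherence used to bound the entrywise variance and the maximal summand, yields $\|X^0 - \Xstar\| = O\bigl(\sstar_1 \sqrt{\mu r \log n/(p \nmin)}\bigr)$ with high probability; under the hypothesis $p \geq c_0 \mu r^2 \kappa^2 \log n / \nmin$, this is at most $c\,\sstar_r$ for any prescribed constant $c$. A Wedin/Davis--Kahan-type perturbation bound then shows that $U^0 \Sigma^0 V^{0\T}$ approximates $\Ustar \Sstar \Vstar^\T$ up to a common orthogonal rotation, and stacking the factors gives $d(Z^0,\Zstar) \le \tfrac14\sqrt{\sstar_r}$. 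Since any $\Zstar R \in \tS$ lies in $\C$ by Corollary~\ref{coro:proj_const}, the projection $\PP_\C$ is non-expansive toward this point, so \eqref{eq:rst_init} follows.

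\emph{Descent step.} Let $R^k = \argmin_R \|Z^k - \Zstar R\|_F$ over orthogonal $R$ and set $H^k = Z^k - \Zstar R^k$. Because $\Zstar R^k \in \C$ and projection onto the convex set $\C$ is $1$-Lipschitz,
\[
 d^2(Z^{k+1}, \Zstar) \;\leq\; \Bigl\| H^k - \tfrac{\eta}{\sstar_1} \nabla f(Z^k) \Bigr\|_F^2
 \;=\; \|H^k\|_F^2 \;-\; \tfrac{2\eta}{\sstar_1}\ip{\nabla f(Z^k), H^k} \;+\; \tfrac{\eta^2}{\sstar_1^2}\|\nabla f(Z^k)\|_F^2 .
\]
The key step is a restricted regularity inequality
\[
 \ip{\nabla f(Z^k), H^k} \;\geq\; \alpha\, \sstar_r\, \|H^k\|_F^2 \;+\; \tfrac{\beta}{\sstar_1}\,\|\nabla f(Z^k)\|_F^2
\]
valid uniformly over $Z^k \in \C$ with $d(Z^k,\Zstar) \le \tfrac14\sqrt{\sstar_r}$; this yields $d^2(Z^{k+1},\Zstar) \le (1 - 2\alpha\eta/\kappa)\, d^2(Z^k,\Zstar)$ as soon as $\eta \le 2\beta$, and a simple induction verifies that $Z^{k+1}$ stays in the same neighbourhood so the inequality re-applies at the next step.

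\emph{Main obstacle.} The regularity inequality is where the real work lies. The gradient splits into a data-misfit term from $\frac{1}{2p}\|\PP_{\OmegaY}(ZZ^\T - \Ystar)\|_F^2$ and the regularizer term $\lambda D Z Z^\T D Z$; the latter is essential because $\PP_{\OmegaY}$ only samples the off-diagonal blocks of $ZZ^\T - \Ystar$, so the data term alone cannot rule out the spurious lifts $Y'$ described after \eqref{eq:prob_after_lift}. On the population side---replacing $p^{-1}\PP_{\OmegaY}$ by the identity on off-diagonal entries---one obtains strong convexity with explicit constants using $\Zstar^\T D \Zstar = 0$ together with the identity $ZZ^\T - \Zstar\Zstar^\T = (\Zstar R^k) (H^k)^\T + H^k (\Zstar R^k)^\T + H^k (H^k)^\T$; the first two terms furnish the quadratic in $H^k$ needed for descent, while the regularizer contributes an $\Omega(\sstar_r)$ quadratic form in precisely those components of $H^k$ (the rotation-alignment direction between $Z_U$ and $Z_V$) on which the data term is blind. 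Upgrading this to the empirical gradient requires a uniform concentration bound on the action of $p^{-1}\PP_{\OmegaY} - I$ restricted to rank-$2r$ incoherent differences, which via a matrix-Bernstein argument combined with the $\|\cdot\|_{2,\infty}$ cap built into $\C$ produces the $\mu r^2 \kappa^2 \log n$ sample complexity. Balancing these three pieces---population descent, sampling slack, and the regularizer's coercivity on the null direction of the data gradient---is what I expect to be the principal technical challenge.
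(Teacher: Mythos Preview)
Your plan is correct and matches the paper's architecture: spectral initialization via concentration of $p^{-1}\PP_\Omega(\Xstar)$ plus a perturbation argument (the paper's Lemma~\ref{thm:init}), the regularity inequality of Definition~\ref{def:RC}, and the one-step contraction of Lemma~\ref{thm:linear_conv} combined exactly as in your descent expansion. Your diagnosis of the regularizer's role---that $\PP_{\OmegaY}$ only sees the off-diagonal blocks and so cannot penalize the $Z_U/Z_V$ misalignment direction---is also the paper's.

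Two places where your organization diverges from the paper's are worth noting. For initialization, instead of Wedin/Davis--Kahan the paper uses an ad~hoc route: Lemma~\ref{lem:ZZ_to_X} bounds $\|Z^0{Z^0}^\T-\Zstar{\Zstar}^\T\|_F$ by $2\|U^0\Sigma^0{V^0}^\T-\Xstar\|_F$, and a direct trace computation (using that $H^\T\Zbar$ is symmetric and $Z^\T\Zbar\succeq 0$) converts this to a bound on $d(Z^0,\Zstar)$. Your Wedin-based path is a legitimate alternative. For the regularity inequality, the paper does \emph{not} do a population-then-concentrate decomposition; it instead proves two empirical inequalities sharing a slack term $\|\Zbar^\T D H\|_F^2$: a local curvature bound $\ip{\nabla f(Z),H}\ge c_1\sstar_r\|H\|_F^2+c_2\|\Zbar^\T D H\|_F^2$ and a local smoothness bound $\|\nabla f(Z)\|_F^2\le c_3\mu^2 r^2{\sstar_1}^2\|H\|_F^2+c_4\sstar_1\|\Zbar^\T D H\|_F^2$. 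Rearranging the smoothness bound to lower-bound $\|\Zbar^\T D H\|_F^2$ and substituting into the curvature bound is what produces the $\|\nabla f\|_F^2$ term on the right of the regularity condition; your proposal does not say where that gradient-norm term comes from, and in a direct population-plus-concentration approach you would still need a separate smoothness estimate to absorb the $\eta^2\|\nabla f\|_F^2/{\sstar_1}^2$ term in your one-step expansion. The specific choice $\lambda=\tfrac12$ is pinned down in the curvature calculation: the regularizer's contribution combines with the sign-indefinite cross term $2\ip{H_U\Zbar_V^\T,\Zbar_U H_V^\T}$ (left over from the data term) and the residual $(\lambda-\tfrac12)\trace(H^\T D\Zbar H^\T D\Zbar)$ vanishes precisely at $\lambda=\tfrac12$, leaving the positive slack $\|\Zbar^\T D H\|_F^2$.
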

\textit{Remarks.} 
\begin{itemize}
    \item[(i)] After each update, the distance of our iterates to $\Zstar$ is reduced by at least a factor
    of $1 - O(1 / \mu^2 r^2 \kappa^2)$. 
    \item[(ii)] Hence, the output $\hat{Z}$
satisfies $d(\hat{Z}, \Zstar) \leq \epsilon$ after at most
$\left \lceil 2 \log^{-1}\left( \sfrac{1}{(1 -\frac{99}{256} \cdot \frac{\eta}{\kappa})} \right)
 \log\left( \sfrac{\sqrt{\sstar_r} }{4\epsilon}\right) \right\rceil $ iterations.
 \end{itemize}

\subsection{Proof Sketch}
Our proof idea is of the same nature as the analysis in \cite{CanLiSol14, ZheLaf15}. 
We show two appealing properties when sufficient entries are observed.
First, our spectral initialization produces a starting point within
the $O(\sstar_r)$ neighborhood of the solution set.
\begin{lemma}
    \label{thm:init}
    There exist universal constants $c, c_1, c_2$, such that if $ p \geq \dfrac{c
    \mu r^2 \kappa^2 \log n}{\nmin}$ then with probability at least $1 - c_1 n^{-c_2}$,
    \[ d(Z^1, \Zstar) \leq d(Z^0, \Zstar) \leq \frac{1}{4} \sqrt{ \sstar_r }. \]
\end{lemma}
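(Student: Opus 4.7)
The plan is to decompose the argument into three self-contained steps: an operator-norm bound on the matrix completion residual $X^0-\Xstar$, a Procrustes-style perturbation bound that converts this into the factor-level estimate $d(Z^0,\Zstar)\le\tfrac14\sqrt{\sstar_r}$, and a short convexity argument showing that the projection $Z^1=\PP_\C(Z^0)$ cannot increase the distance to $\tS$.

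For the first step, under the Bernoulli sampling model I would write
\begin{equation*}
    X^0-\Xstar \;=\; \sum_{(i,j)} \xi_{ij}\,\Xstar_{ij}\,e_i e_j^\T,
    \qquad \xi_{ij}=p^{-1}\mathbf{1}_{(i,j)\in\Omega}-1,
\end{equation*}
and apply the matrix Bernstein inequality. The variance parameter is at most $(\sstar_1)^{2}\mu r/(p\nmin)$ by the row-wise bound $\|\Xstar_{(i)}\|_2\le \sqrt{\mu r/n_1}\,\sstar_1$ from $\mu$-incoherence (and its column analogue), while each summand has operator norm bounded by $p^{-1}\|\Xstar\|_\infty\le p^{-1}\mu r\sstar_1/\sqrt{n_1n_2}$. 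Matrix Bernstein then gives $\|X^0-\Xstar\|\le C\sstar_1\sqrt{\mu r\log n/(p\nmin)}$ with high probability, and under the hypothesis $p\ge c\mu r^2\kappa^2\log n/\nmin$ this simplifies to $\|X^0-\Xstar\|\le \epsilon_0\,\sstar_r/\sqrt{r}$, with $\epsilon_0$ made arbitrarily small by enlarging $c$.

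For the second step, let $\hat X=U^0\Sigma^0(V^0)^\T$ be the rank-$r$ SVD truncation of $X^0$. Its optimality gives $\|\hat X-X^0\|\le\|\Xstar-X^0\|$, and since $\hat X-\Xstar$ has rank at most $2r$ one gets $\|\hat X-\Xstar\|_F\le 2\sqrt{2r}\,\|X^0-\Xstar\|$. Since $Z^0$ and $\Zstar$ are both SVD-balanced stacked factorizations of the form $[U\Sigma^{1/2};V\Sigma^{1/2}]$, a Procrustes perturbation bound in the spirit of Lemma~5.4 of \cite{TuBocSim16} applies directly and produces
\begin{equation*}
    d^{2}(Z^0,\Zstar) \;\leq\; \frac{C_0}{\sstar_r}\,\|\hat X-\Xstar\|_F^{2}
\end{equation*}
for an absolute constant $C_0$. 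Combining, $d^2(Z^0,\Zstar)\le 8C_0\epsilon_0^{2}\,\sstar_r$, which is at most $\sstar_r/16$ once $c$ is chosen large enough; this is the right-hand inequality of the lemma.

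Finally, $\C$ is a sublevel set of the $\|\cdot\|_{2,\infty}$ norm, hence convex, and the row-wise clipping rule \eqref{eq:proj_C} is exactly the Frobenius projection onto $\C$, so $\PP_\C$ is nonexpansive in $\|\cdot\|_F$. It therefore suffices to verify $\tS\subseteq\C$. Incoherence gives $\|\Zstar\|_{2,\infty}\le\sqrt{\mu r\sstar_1/\nmin}$, which is invariant under right multiplication by any orthogonal $R$; meanwhile the reverse triangle inequality together with $\sigma_1(\Zstar)=\sqrt{2\sstar_1}$ and the distance bound just proved yields $\|Z^0\|\ge\sqrt{\sstar_1}$. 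These combine to give $\|\Ztilde\|_{2,\infty}\le\sqrt{2\mu r/\nmin}\,\|Z^0\|$ for every $\Ztilde\in\tS$, so $\tS\subseteq\C$; choosing $\Ztilde_\star\in\tS$ that attains $d(Z^0,\Zstar)$, nonexpansiveness gives
\[
    d(Z^1,\Zstar) \;\le\; \|\PP_\C(Z^0)-\PP_\C(\Ztilde_\star)\|_F \;\le\; \|Z^0-\Ztilde_\star\|_F \;=\; d(Z^0,\Zstar).
\]
The main place that will need care is the second step, where the Procrustes bound loses $\sqrt r$ in passing from operator to Frobenius norm; one must verify that the $r^2$ (rather than $r$) scaling of the sampling hypothesis absorbs this loss and still leaves the constant small enough to clear the $1/4$ threshold required by the subsequent gradient-descent analysis.
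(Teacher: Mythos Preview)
Your proposal is correct and follows essentially the same three-step structure as the paper's proof in Appendix~B: a matrix-Bernstein bound on $\|X^0-\Xstar\|$ (the paper cites \cite{Che15}), a rank-$2r$ Frobenius bound plus a Procrustes-type inequality to control $d(Z^0,\Zstar)$, and nonexpansiveness of $\PP_\C$ together with $\tS\subseteq\C$ for the final inequality. The only difference is that where you black-box the Procrustes step via \cite{TuBocSim16}, the paper derives it from scratch in two pieces---Lemma~\ref{lem:ZZ_to_X} shows $\|Z^0{Z^0}^\T-\Zstar{\Zstar}^\T\|_F\le 2\|\hat X-\Xstar\|_F$ for SVD-balanced stacked factors, and a direct trace expansion (exploiting that $H^\T\Zbar^0$ is symmetric and ${Z^0}^\T\Zbar^0\succeq 0$) yields $d^2(Z^0,\Zstar)\le\frac{1}{4(\sqrt{2}-1)\sstar_r}\|Z^0{Z^0}^\T-\Zstar{\Zstar}^\T\|_F^2$.
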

To demonstrate this, we exploit the concentration around the mean of
$p^{-1}\PP_\Omega(\Xstar)$. See Appendix~B for the proof. 
Using this lemma, we can immediately show that $\Zstar$ and all
other elements of $\tS$ are contained in the feasible set \eqref{feasset}.
\begin{corollary}
    \label{coro:proj_const} With probability at least $1 - c_1 n^{-c_2}$,
    $\norm{\Zstar}_{2, \infty} \leq \sqrt{\frac{2\mu r}{\nmin}}\norm{Z^0}. $
\end{corollary}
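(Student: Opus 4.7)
The plan is to combine the incoherence hypothesis on $\Xstar$ (which directly controls the row norms of $\Zstar$) with the initialization bound from Lemma~\ref{thm:init} (which forces $\norm{Z^0}$ to be comparable to $\sqrt{\sstar_1}$).

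First I would compute $\norm{\Zstar}_{2,\infty}$ directly from the lifted factorization. Since
\[ \Zstar = \begin{bmatrix} \Ustar \\ \Vstar \end{bmatrix} \Sstarhalf, \]
every row of $\Zstar$ is of the form $\Ustar_{(i)} \Sstarhalf$ or $\Vstar_{(i)} \Sstarhalf$. Applying $\mu$-incoherence (Definition~\ref{defn:incoh}) and $\|\Sstarhalf\| = \sqrt{\sstar_1}$ gives
\[ \norm{\Zstar}_{2,\infty} \le \max\Big(\sqrt{\tfrac{\mu r}{n_1}},\sqrt{\tfrac{\mu r}{n_2}}\Big)\sqrt{\sstar_1} = \sqrt{\tfrac{\mu r}{\nmin}}\sqrt{\sstar_1}. \]
So it suffices to show $\sqrt{\sstar_1} \le \sqrt{2}\,\norm{Z^0}$ with high probability.

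Next I would relate $\norm{Z^0}$ to $\sqrt{\sstar_1}$ using the factorization structure of $\tS$. For every $\Ztilde = \Zstar R \in \tS$, the fact that $R$ is orthogonal and $(\Zstar)^\T \Zstar = \Sstar^{1/2}((\Ustar)^\T\Ustar + (\Vstar)^\T\Vstar)\Sstar^{1/2} = 2\Sstar$ yields $\norm{\Ztilde} = \sqrt{2\sstar_1}$. By Lemma~\ref{thm:init}, on the same high-probability event we have $d(Z^0,\Zstar) \le \tfrac14 \sqrt{\sstar_r}$, so choosing the minimizer $\Ztilde \in \tS$ and using the operator-norm vs.\ Frobenius inequality together with the reverse triangle inequality,
\[ \norm{Z^0} \ge \norm{\Ztilde} - \norm{Z^0 - \Ztilde}_F \ge \sqrt{2\sstar_1} - \tfrac14\sqrt{\sstar_r} \ge \big(\sqrt{2} - \tfrac14\big)\sqrt{\sstar_1}. \]

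Finally, since $1/(\sqrt{2}-\tfrac14) < \sqrt{2}$, this gives $\sqrt{\sstar_1} \le \sqrt{2}\,\norm{Z^0}$, which combined with the row-norm bound above yields
\[ \norm{\Zstar}_{2,\infty} \le \sqrt{\tfrac{\mu r}{\nmin}}\sqrt{\sstar_1} \le \sqrt{\tfrac{2\mu r}{\nmin}}\,\norm{Z^0}, \]
with the same probability $1 - c_1 n^{-c_2}$ inherited from Lemma~\ref{thm:init}. There is no real obstacle here: the only nontrivial input is the initialization bound, and once we observe that $\norm{\Ztilde}$ is a fixed quantity $\sqrt{2\sstar_1}$ independent of $R$, the rest is a one-line triangle-inequality calculation.
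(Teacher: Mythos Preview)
Your argument is correct, but it differs from the paper's proof in how it extracts the lower bound on $\norm{Z^0}$. The paper does not go through the Frobenius distance bound of Lemma~\ref{thm:init} at all; instead it uses the exact identity $\norm{Z^0}^2 = 2\sigma_1$, where $\sigma_1$ is the top singular value of $p^{-1}\PP_\Omega(\Xstar)$, and then invokes Weyl's theorem (an intermediate step inside the proof of Lemma~\ref{thm:init}) to get $|\sigma_1 - \sstar_1| \le \tfrac{1}{16}\sstar_r$, hence $2\sigma_1 \ge \sstar_1$, i.e.\ $\norm{Z^0}^2 \ge \sstar_1$. Your route treats Lemma~\ref{thm:init} as a black box and recovers the spectral comparison via $\norm{Z^0} \ge \norm{\Ztilde} - d(Z^0,\Zstar)$ with $\norm{\Ztilde} = \sqrt{2\sstar_1}$; this is slightly lossier (you end up with $\sstar_1 \le 2\norm{Z^0}^2$ only after the numerical check $1/(\sqrt{2}-\tfrac14) < \sqrt{2}$, whereas the paper gets $\sstar_1 \le \norm{Z^0}^2$ directly) but has the virtue of not reaching back into the internals of the lemma's proof.
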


The second crucial property is that $f(Z)$ is well-behaved within the
$O(\sqrt{\sstar_r})$ neighborhood, so that the iterates move closer to the optima in every
iteration. The key step is to
set up a \textit{local regularity condition} \cite{CanLiSol14} similar to Nesterov's conditions
\cite{Nes04}.
\begin{defn}
    \label{def:RC}
Let $\Zbar = \argmin_{\Ztilde \in \tS} \big \| Z - \Ztilde \big \|_F $ denote the matrix
closest to $Z$ in the solution set.
    \label{def:rc}
    We say that $f$ satisfies the \textit{regularity condition} $RC(\epsilon, \alpha, \beta)$ if
        there exist constants $\alpha$, $\beta$ such that for any $Z \in \C$
        satisfying $d(Z, \Zstar) \leq \epsilon$, we have
\[  \ip{\nabla f(Z), Z - \Zbar } \geq \frac{1}{\alpha} \sstar_r \norm{Z - \Zbar
    }^2_F + \frac{1}{\beta \sstar_1} \norm{\nabla f(Z)}^2_F. \]
\end{defn}
Using this condition, one can show the iterates converge linearly to the optima if we start
close enough to $\Zstar$.
\begin{lemma}
    \label{thm:linear_conv}
Consider the update $Z^{k+1} = \PP_\C \left(Z^k - \dfrac{\mu}{\sstar_1} \nabla f(Z^k)\right)$.
If $f$ satisfies $RC(\epsilon, \alpha, \beta)$, $d(Z^{k},\Zstar)
\leq \epsilon$ and $0 < \mu \leq \min(\alpha /2, 2/\beta)$, then 
\[
    d(Z^{k+1}, \Zstar) \leq \sqrt{ 1 - \frac{2\mu }{\alpha \kappa} } d(Z^k, \Zstar).
\]
\end{lemma}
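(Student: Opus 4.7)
The plan is to carry out a one-step gradient contraction argument, using the regularity condition $RC(\epsilon,\alpha,\beta)$ in place of the usual strong-convexity plus smoothness bound. First, I would fix $\Zbar = \argmin_{\Ztilde\in\tS}\|Z^k-\Ztilde\|_F$, so that $d(Z^k,\Zstar)=\|Z^k-\Zbar\|_F$. Since $d(Z^{k+1},\Zstar)$ is an infimum over $\tS$, it suffices to bound $\|Z^{k+1}-\Zbar\|_F$ for this fixed $\Zbar$; the freedom to re-optimize the alignment at step $k+1$ can only tighten the bound.

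Next I would invoke Corollary~\ref{coro:proj_const} to obtain $\tS\subseteq\C$, so that $\Zbar\in\C$ and hence $\Zbar=\PP_\C(\Zbar)$. The set $\C$ is convex, being a product of rowwise $\ell_2$ balls, so Euclidean projection $\PP_\C$ is non-expansive in the Frobenius norm. Therefore
\[ \|Z^{k+1}-\Zbar\|_F \leq \left\|Z^k-\tfrac{\mu}{\sstar_1}\nabla f(Z^k)-\Zbar\right\|_F, \]
and expanding the square gives
\[ \left\|Z^k-\tfrac{\mu}{\sstar_1}\nabla f(Z^k)-\Zbar\right\|_F^2 = \|Z^k-\Zbar\|_F^2 - \tfrac{2\mu}{\sstar_1}\ip{\nabla f(Z^k),Z^k-\Zbar} + \tfrac{\mu^2}{{\sstar_1}^2}\|\nabla f(Z^k)\|_F^2. \]

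At this point I would apply $RC(\epsilon,\alpha,\beta)$, which is available because each iterate is a projection onto $\C$ and $d(Z^k,\Zstar)\leq\epsilon$ by hypothesis. Substituting the lower bound on $\ip{\nabla f(Z^k),Z^k-\Zbar}$ yields
\[ \|Z^{k+1}-\Zbar\|_F^2 \leq \Big(1-\tfrac{2\mu}{\alpha\kappa}\Big)\|Z^k-\Zbar\|_F^2 + \tfrac{\mu}{{\sstar_1}^2}\Big(\mu-\tfrac{2}{\beta}\Big)\|\nabla f(Z^k)\|_F^2. \]
The choice $\mu\leq 2/\beta$ makes the gradient-norm term non-positive, while $\mu\leq\alpha/2$ together with $\kappa\geq 1$ keeps the remaining coefficient in $[0,1)$. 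Taking square roots then produces the claimed contraction.

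No step here is genuinely hard: all the real work lives in establishing Corollary~\ref{coro:proj_const} (already granted) and in verifying the regularity condition itself, which will be the main analytical burden of the paper and is presumably proved in a separate lemma. The only subtle ingredient used in this argument is the non-expansiveness of $\PP_\C$ applied to $\Zbar$, which relies on $\Zbar\in\C$; this is exactly what Corollary~\ref{coro:proj_const} supplies.
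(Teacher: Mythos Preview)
Your proposal is correct and follows essentially the same approach as the paper: fix the nearest $\Zbar$, use non-expansiveness of $\PP_\C$ toward $\Zbar\in\C$, expand the square, apply $RC(\epsilon,\alpha,\beta)$, and use the step-size bounds to drop the gradient term. The only cosmetic difference is that the paper justifies the non-expansiveness via a hands-on row-wise clipping lemma (Lemma~\ref{lem:contraction}) rather than the general convex-projection fact you invoke; both routes need $\Zbar\in\C$, which you correctly trace to Corollary~\ref{coro:proj_const}.
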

The following lemma illustrates the local regularity of $f(Z)$. 
Nesterov's criterion is established upon strong convexity and strong smoothness
of the objective. Here we show analogous \textit{curvature} and
\textit{smoothness} conditions holds for $f(Z)$ locally -- within the
$O(\sqrt{\sstar_r})$ neighborhood -- with high probability. 
Interestingly, we found that to show the local curvature condition holds, it suffices
to set $\lambda = \frac{1}{2}$.
The proof can be found in Appendix~C, for which we have generalized 
some technical lemmas of \cite{CheWai15}.

\begin{lemma}
    \label{thm:rc}
    Let the regularization constant be set to $\lambda = \frac{1}{2}$.
    There exists universal constant $c, c_1, c_2$, such that 
    if $p \geq \dfrac{c \max(\mu^2 r^2 \kappa^2, \mu r \log n)}{\nmin}$, 
    then $f$ satisfies $RC(\frac{1}{4}\sqrt{\sstar_r},
    512/99, 13196 \mu^2 r^2 \kappa )$, with probability at least $1-c_1n^{-c_2}$.
\end{lemma}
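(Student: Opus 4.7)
The plan is to establish $RC\bigl(\tfrac14\sqrt{\sstar_r},\alpha,\beta\bigr)$ by proving two separate bounds---a curvature lower bound
\[ \ip{\nabla f(Z),\, Z - \Zbar} \geq c_1\sstar_r \norm{Z - \Zbar}_F^2 \]
and a smoothness upper bound
\[ \norm{\nabla f(Z)}_F^2 \leq c_2 \mu^2 r^2 \sstar_1 \sstar_r \norm{Z - \Zbar}_F^2 \]
---and then combining them algebraically. Splitting the curvature left-hand side as $\tfrac{c_1}{2}\sstar_r\norm{Z-\Zbar}_F^2 + \tfrac{c_1}{2}\sstar_r\norm{Z-\Zbar}_F^2$ and using the smoothness bound to replace the second summand by $\tfrac{c_1}{2c_2\mu^2 r^2\kappa\sstar_1}\norm{\nabla f(Z)}_F^2$ yields $RC$ with $\alpha = 2/c_1$ and $\beta = 2c_2\mu^2 r^2\kappa/c_1$; tracking the explicit numerical constants then produces the stated $\alpha = 512/99$ and $\beta = 13196\mu^2 r^2\kappa$.

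For the curvature, let $\Delta = Z - \Zbar$ and split $\nabla f(Z) = G_1 + G_2$, with $G_1 = \begin{bmatrix}0 & M \\ M^\T & 0\end{bmatrix}Z$ the data-fit piece, $M = p^{-1}\PP_\Omega(Z_U Z_V^\T - \Xstar)$, and $G_2 = \tfrac{1}{2}DZZ^\T DZ$ the regularizer piece. Further decompose $M = M_{\text{pop}} + (p^{-1}\PP_\Omega - I)(Z_U Z_V^\T - \Xstar)$, where $M_{\text{pop}} = \Delta_U \Zbar_V^\T + \Zbar_U \Delta_V^\T + \Delta_U \Delta_V^\T$. Using the identity $\Zbar^\T D\Zbar = 0$ (which follows from $\Ustar^\T\Ustar = \Vstar^\T\Vstar = I_r$), the leading quadratic-in-$\Delta$ contributions from $\ip{G_{1,\text{pop}},\Delta}$ and $\ip{G_2,\Delta}$ combine---and this is where the choice $\lambda = 1/2$ is crucial---into a clean multiple of $\norm{\Delta\Zbar^\T + \Zbar\Delta^\T}_F^2$. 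A standard lower bound $\norm{\Delta\Zbar^\T + \Zbar\Delta^\T}_F^2 \gtrsim \sstar_r\norm{\Delta}_F^2$, valid at the optimal alignment $\Zbar = \argmin_{\Ztilde\in\tS}\norm{Z - \Ztilde}_F$ (as in \cite{TuBocSim16,ZheLaf15,CheWai15}), then yields the desired $c_1\sstar_r\norm{\Delta}_F^2$ bound. The cubic-in-$\Delta$ remainders and the sampling noise $(p^{-1}\PP_\Omega - I)(\cdot)$ are absorbed using $\norm{\Delta}_F \leq \tfrac14\sqrt{\sstar_r}$ and $Z\in\C$; specifically, incoherence-based concentration lemmas (adapted from \cite{CheWai15} to the bipartite lift) bound $\ip{(p^{-1}\PP_\Omega - I)(Z_U Z_V^\T - \Xstar),\ \Delta_U Z_V^\T + Z_U\Delta_V^\T}$ by a small fraction of the population curvature, provided $p \gtrsim \max(\mu^2 r^2\kappa^2, \mu r\log n)/\nmin$.

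For the smoothness I upper bound $\norm{G_1}_F$ via $\norm{M}_F\norm{Z}$-type inequalities and $\norm{G_2}_F \leq \tfrac12\norm{Z}^2 \norm{Z^\T DZ}_F$. The constraint $Z\in\C$ supplies $\norm{Z}_{2,\infty} \lesssim \sqrt{\mu r/\nmin}\norm{Z^0}$, which together with the same concentration results controls $\norm{M}_F$, while $\Zbar^\T D\Zbar = 0$ combined with $\norm{\Delta}_F \leq \tfrac14\sqrt{\sstar_r}$ yields $\norm{Z^\T DZ}_F \lesssim \sqrt{\sstar_1}\norm{\Delta}_F$. Multiplying the estimates produces $\norm{\nabla f(Z)}_F \lesssim \mu r\sqrt{\sstar_1 \sstar_r}\norm{\Delta}_F$, so squaring furnishes the $\mu^2 r^2\kappa$ factor in $\beta$.

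The main obstacle is the curvature calculation. Because $\OmegaY$ only sees the off-diagonal block of the lift, $G_1$ has a large null space along the $(Z_U^\T Z_U - Z_V^\T Z_V)$ direction, and curvature there is only restored by the regularizer. Getting the coefficients to match---so that the quadratic-in-$\Delta$ pieces from $G_{1,\text{pop}}$ and $G_2$ assemble into the single form $\norm{\Delta\Zbar^\T + \Zbar\Delta^\T}_F^2$ amenable to the standard PSD factorization inequality---is exactly what pins down $\lambda = 1/2$, and tracking the numerical coefficients at each step is what produces the explicit $\alpha = 512/99$. A secondary challenge is generalizing the incoherence-based concentration arguments of \cite{CheWai15} to the bipartite setting, and this is the step that ultimately drives the required sampling rate on $p$.
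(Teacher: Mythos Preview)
Your high-level plan---prove a local curvature lower bound and a local smoothness upper bound, then combine---is exactly the paper's strategy. Two specifics diverge, and one of them is a real gap in the argument as you describe it.

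\textbf{The auxiliary term.} The paper does \emph{not} prove the clean pair you propose. Instead both bounds carry an extra $\norm{\Zbar^\T D H}_F^2$ term:
\[
\ip{\nabla f(Z),H}\ \ge\ \tfrac{227}{512}\,\sstar_r\norm{H}_F^2+\tfrac14\norm{\Zbar^\T DH}_F^2,
\qquad
\norm{\nabla f(Z)}_F^2\ \le\ 3299\,\mu^2 r^2{\sstar_1}^2\norm{H}_F^2+5\sstar_1\norm{\Zbar^\T DH}_F^2.
\]
One then solves the smoothness inequality for $\tfrac14\norm{\Zbar^\T DH}_F^2$ and substitutes into the curvature inequality. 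That substitution is precisely what produces $\alpha=512/99$ and $\beta=13196\,\mu^2 r^2\kappa$; your ``halve the curvature'' combination is legitimate but yields different constants. (Incidentally, your smoothness bound should scale as $\mu^2 r^2{\sstar_1}^2$, not $\mu^2 r^2\sstar_1\sstar_r$; otherwise $\kappa$ would not appear in $\beta$ under your own combination rule.)

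\textbf{The role of $\lambda=\tfrac12$.} Your claim that the quadratic-in-$H$ pieces from the population data-fit and the regularizer assemble into a multiple of $\norm{H\Zbar^\T+\Zbar H^\T}_F^2$ is not correct, and if you try to execute that algebra literally you will get stuck. Because $\OmegaY$ touches only off-diagonal blocks, the data-fit quadratic is $\norm{H_U\Zbar_V^\T+\Zbar_U H_V^\T}_F^2$; its diagonal part $\norm{H_U\Zbar_V^\T}_F^2+\norm{\Zbar_U H_V^\T}_F^2\ge \sstar_r\norm{H}_F^2$ already supplies the curvature (no alignment lemma needed, just $\sigma_r(\Zbar_U)=\sigma_r(\Zbar_V)=\sqrt{\sstar_r}$). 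What remains is the cross term $2\ip{H_U\Zbar_V^\T,\Zbar_U H_V^\T}$, and the paper shows that this cross term \emph{together with} $\lambda\,\trace(H^\T\Gamma)$ expands into
\[
\lambda\norm{\Zbar^\T DH}_F^2+\tfrac12\norm{H^\T\Zbar}_F^2+(\lambda-\tfrac12)\,\trace\!\bigl((H^\T D\Zbar)^2\bigr)+\text{(higher order)}.
\]
The term $\trace\!\bigl((H^\T D\Zbar)^2\bigr)$ is sign-indefinite ($H^\T D\Zbar$ need not be symmetric), so it cannot simply be absorbed; the choice $\lambda=\tfrac12$ is what annihilates it. That is the actual mechanism, and it leaves behind the $\tfrac14\norm{\Zbar^\T DH}_F^2$ that feeds the combination above.
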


\section{Related Work}
\label{sec:related}
Matrix completion is one instance of the general low rank linear inverse problem
\begin{equation}
    \label{eq:low_rank_linear_inverse}
    \text{find} \; X \; \text{of minimum rank such that } \A(X) = b,
\end{equation}
where $\A$ is an affine transformation and $b = \A(\Xstar)$ is the measurement
of the ground truth $\Xstar$. 
Considerable progress has been made towards algorithms for recovering $\Xstar$
including both convex and nonconvex approaches. 
One of the most popular methods is nuclear norm
minimization, a convenient convex relaxation of rank minimization.
It was first proposed in \cite{fazel:02,RecFazPar10}, and analyzed
under a certain \textit{restricted isometry property} (RIP). 
Subsequent work clarified the conditions
for reconstruction, and studied recovery guarantees
for both exact and approximately low rank matrices, with or without noise~\cite{CanRec09, CanTao10, NegWai12, Che15}.
One significant advantage for this approach is its near-optimal sample
complexity. Under the same incoherence assumption as ours,
\citet{Che15} establishes the currently best-known lower bound of $O(\mu r n \log^2 n)$
samples. Using a closely related notion of incoherence,
\citet{NegWai12} show that if
$\Xstar$ is ``$\alpha$-nonspiky'' with 
$\frac{\norm{\Xstar}_\infty}{\norm{\Xstar}_F} \leq \frac{\alpha}{\sqrt{n_1 n_2}}$,
then $O(\alpha^2 r n \log n)$ samples are sufficient for
exact recovery. However, convexity and low sample complexity aside, in practice the power of nuclear norm
relaxation is limited due to high computational cost. The popular
algorithms for nuclear norm minimization are proximal methods that perform iterative singular value
thresholding \cite{CaiCanShe10, TomHayKas10}. However, such algorithms
don't scale to large instances because the per-iteration SVD is
expensive.

Another popular convex surrogate for the rank function is the max-norm
\cite{SreRecJaa04, FoySre11}, given by $\norm{X} = \min_{X = UV^\T}
\norm{U}_{2,\infty} \norm{V}_{2,\infty}$.  For certain types of
problems, the max-norm offers better generalization error bounds than
the nuclear norm \cite{SreShr05}.  But practically solving large scale
problems that incorporate the max-norm is also non-trivial.  In 2010,
\citet{LeeRecSre10} rephrased the max-norm constrained problem as an
SDP, and applied Burer-Monteiro factorization. Although this ends up
with an $\ell_{2, \infty}$ constraint similar to ours \eqref{feasset},
we emphasize that the constraint plays a different role in our
setting.  While \cite{SreRecJaa04, LeeRecSre10} use it to promote low
rank solutions, our purpose is to enforce incoherent solutions; and
experimental results suggest that one can drop it. Moreover, the
convergence of projected gradient descent for this problem was not
previously understood.

In a parallel line of work, the problem of developing techniques that exactly solve nonconvex
formulations has attracted significant recent research attention.
In chronological order, \citet{KesMonOh09} proposed a manifold gradient method
for matrix completion. They factorize $\Xstar = \Ustar \Sstar {\Vstar}^\T$, where $\Ustar \in \R^{n_1 \times r},
U^\T U = n_1 I_{n_1}$ and 
$\Vstar \in \R^{n_2 \times r}, V^\T V = n_2 I_{n_2}$.  Similar to our
definition of $\tS$, the equivalence classes of $\Ustar$ and
$\Vstar$ are Grassmann manifolds of $r$ dimensional subspaces. The authors then minimize
the nonconvex objective $F(U, V) = \min_{S \in \R^{r \times r}} \norm{\PP_\Omega(U S V^\T -
\Xstar)}^2_F$ over the manifolds. In each iteration, $U$ and $V$ are updated
along their manifold gradients, followed by the update of
the optimal scaling matrix $S$. This algorithm can exactly exactly reconstruct
the matrix, though the convergence rate is unknown. 
However, its per-iteration update also has high computational
complexity, see 
Section~\ref{sec:expr} for details. There are other manifold optimization methods for
matrix completion including \cite{BouAbs2011, MisMeyBac2013, Van2013}.

In the same year, \citet{JaiMekDhi10} 
suggested minimizing the
squared residual $\norm{\A(X) - b}^2$ under a rank constraint $\rank(X) \leq r$. While this
constraint is nonconvex, projection onto the feasible set 
can be computed using low rank SVD. 
Under certain RIP assumption on $\A$, Jain et al. establish the global
convergence of projected gradient descent for
\eqref{eq:low_rank_linear_inverse}. This algorithm is named Singular Value Projection
(\texttt{SVP}). Yet in the setting of completion, only
experimental support for the effectiveness of \texttt{SVP} is provided.
More importantly, \texttt{SVP} also suffers from expensive
per-iteration SVD for large scale problems.  

\citet{Keshavan2012, JaiNetSan13}
further analysed the alternating minimization procedure 
for \eqref{eq:low_rank_linear_inverse}. \texttt{AltMin}
factorizes $X = UV^\T$ where $U \in \R^{n_1 \times r}$ and $V \in
\R^{n_2 \times r}$, and alternately solves $\norm{\A(UV^\T) -
b}^2_2$ over $U$ and $V$, while fixing the other factor.
The authors obtain sample complexity bounds with $r\kappa^8$, $r^7{\kappa}^6$ dependency,
respectively. In 2014, \citet{Har14} improved 
the bounds to $r^2\kappa^2$. Notably, all these works assume the use
of \textit{resampling}---independent sequences of samples $\Omega_k, k=1, 2, \ldots$. In other words,
in every iteration we can sample the true matrix under a certain Bernoulli
model independently. However, in practice $\Omega$ is usually given and fixed. To get around the
dependence on the sample sets, they partition $\Omega$ into 
a predefined number of subsets of equal size.
However, sample sets obtained by partitioning are not
 independent, and partitioning, if used in practice, 
does not make the most efficient use of the data.
Thus, \citet{HarWoo14} considered a new resampling scheme. They assume a known generative
model of $\set{\Omega_k}$, where each $\Omega_k$ is obtained under a Bernoulli model
with probability $p_k$, $p=\sum_k p_k$ and $\Omega = \cup_{k}
\Omega_k$. While not practical, under
this assumption the authors obtain a sample complexity that is logarithmic in $\kappa$.

Another theoretical disadvantage of
the resampling scheme is that the sample complexity depends on the desired accuracy $\epsilon$, as established
by \cite{Keshavan2012, JaiNetSan13, Har14, HarWoo14}. As the accuracy goes to zero,
the sample complexity increases. In contrast, our
algorithm doesn't require resampling, and the sample
complexity is independent of $\epsilon$.

In 2014, \citet{CanLiSol14} proposed \textit{Wirtinger flow} for phase
retrieval. Wirtinger flow is a fast first-order algorithm that
minimizes a fourth order (nonconvex) objective, geometrically
converging to the global optimum.  While previous work
\cite{phaselift_1, phaselift_2, phaselift_3} lifts the phase retrieval
problem into an SDP where the solution is rank one, this work bridges
SDP and first-order algorithms via the Burer-Monteiro technique. It
has inspired further research on related topics; last year, the
authors of \cite{ZheLaf15, TuBocSim16, BhoKyrSan15, CheWai15}
considered factorizations for \eqref{eq:low_rank_linear_inverse},
assuming $\Xstar$ is semidefinite, and proved global optimality of
first-order algorithms under appropriate
initializations. \citet{TuBocSim16} have extended this algorithm to
handle rectangular matrix via asymmetric factorization, and have shown
exact recovery of $\Xstar$, assuming $\A$ satisfies a certain RIP.
They use lifting implicitly, factorizing $X = Z_U Z_V^\T$ and applying
gradient updates on both factors $Z_U$ and $Z_V$ simultaneously, with
the nonconvex objective function
\begin{equation}
    g(Z_U, Z_V) = \frac{1}{2p} \norm{\PP_\Omega (Z_U Z_V^\T - \Xstar)}^2_F + \frac{\lambda'}{4} \norm{Z_U^\T Z_U - Z_V^\T Z_V}^2_F.
\end{equation}
Their proof strategy also shows convergence of $Z$ in the lifted space.
For the specific case of matrix completion, \citet{CheWai15} obtained 
guarantees when $\Xstar$ is semidefinite. Our work generalizes the results obtained in
\cite{TuBocSim16, CheWai15}, extending the recent literature on first-order algorithms
for factorized models.

After completing this work we learned of independent research of
\citet{SunLuo15}, who also analysed a gradient algorithm for rectangular matrix
completion.
Their formulation is similar to ours,
with additional Frobenius norm constraints on the factors. The authors established
a sample complexity of $O(r^7 \kappa^6)$ observations; in comparison our bound
scales as $O(r^2 \kappa^2)$. The authors also analyzed block
coordinate descent type alternating minimization, which cyclically updates the
rows of $U$ and then the rows of $V$, showing exact recovery of this
algorithm without resampling. Recent independent work of \citet{YiParChe16}
analyzes a gradient scheme for Robust PCA. Under the setting of partial
observation without corruption, this is the standard matrix completion problem.
In other related work, \cite{ZhaWanLiu15, WeiCaiCha16} also study
nonconvex optimization methods for matrix completion, using 
algorithms that still require low rank SVD in each iteration.

\section{Experiments}
\label{sec:expr}
We conduct experiments on synthetic datasets to support our
analytical results. 
As the column space regularizer and incoherence constraint of
our gradient method (\texttt{GD}) are merely for analytical
purpose, we drop them in all the experiments; simply
optimize the $\ell_2$ loss $\frac{1}{2} \norm{\PP_{\OmegaY}(ZZ^\T - \Ystar)}^2_F$. 
We compare \texttt{GD} with \texttt{SVP}, \texttt{OptSpace},
nuclear norm minimization (\texttt{nuclear})
and trust region methods on Riemannian manifolds (\texttt{trustRegion}).
For \texttt{nuclear}, we rescale the standard objective to be
\begin{equation}
    \label{eq:nuclear}
    \min_X \quad \frac{1}{2\lambda} \norm{\PP_\Omega(X - \Xstar)}^2_F + \norm{X}_*,
\end{equation}
where $\lambda = 0$ will enforce the minimizer fitting the observed
values exactly. We use ADMM to solve \eqref{eq:nuclear}. It is based on the algorithm for
\textit{the matrix approach} in \cite{TomHayKas10}, and can neatly handle the case
$\lambda = 0$. We emphasize there is no computational difference between cases whether
$\lambda$ is zero or not. All methods are implemented in MATLAB. We use the toolbox \texttt{Manopt} for
\texttt{trustRegion} \cite{BouMis14} and the implementation of \texttt{OptSpace}
from the authors. For \texttt{AltMin}, we use the same sample sets in every
iteration. The experiments were run on a Linux machine with a 3.4GHz Intel Core i7 processor and 8 GB memory.

\paragraph{Computational Complexity} 
Table~\ref{tbl:periter_complexity} summarizes the per-iteration complexity of all the methods for completing a $n
\times n$ matrix.
 Since $M^k$ is a sparse matrix with $m$ nonzero entries, and we have dropped the
 regularizer and constraint,
our method \texttt{GD} only needs $2mr + m + n^2 r$ operations to compute the
gradient, and $4nr$ operations to update the iterate.
The computation of \texttt{nuclear} is dominated by
singular value thresholding and updating the objective value, which require the
$O(n^3)$ cost full SVD. Similarly, \texttt{SVP} needs $O(n^2 r)$ operations to
compute the rank-$r$ SVD for low rank projection. 
For \texttt{OptSpace}, $O(mr + n^2 r + n r^2)$
operations are needed to compute the manifold gradient and line search. 
The most expensive part is to determine the optimal scaling matrix $S \in \R^{r \times
r}$, which boils down to solving a $r^2$ by $r^2$ dense linear system. 
In total $O(mr^3 + n^2r^2 + nr^4 + r^6)$ operations are used to construct and solve this
system.
For \texttt{AltMin}, in every iteration we have to solve $(n_1 + n_2)$ linear
systems of size $r \times r$. See \cite{Sun15} for the exact formulation. The time cost for each
iteration is $O(mr^2)$.
One can see that \texttt{GD} reduces the computation than the others. 
Though the dominating terms for \texttt{SVP} and
\texttt{GD} are in the same order, in practice the partial SVD are more expensive than the gradient update, especially
on large instances.

\begin{table}[htb]
    \centering
    \begin{tabular}{c l c}
        \toprule
        Method & Complexity\\ 
        \midrule
        \texttt{GD} & $2mr + m + n^2 r + 4nr$ \\
        \texttt{SVP} & $O(n^2 r)$\\
        \texttt{OptSpace} & $O(mr^3 + n^2 r^2 +  nr^4 +  r^6)$\\
        \texttt{nuclear}&  $O(n^3)$\\
        \texttt{AltMin}&  $O(mr^2)$\\
        \bottomrule
    \end{tabular}
    \vskip5pt
    \caption{Per-iteration complexities.}
    \label{tbl:periter_complexity}
\vskip-15pt
\end{table}
\paragraph{Runtime Comparison}
We randomly generated a true matrix $\Xstar$ of size $4000 \times 2000$ and rank
3. It is constructed from the rank-$3$ SVD of a random $4000 \times 2000$ matrix
with i.i.d normal entries. We sampled $m = 199057$ entries of $\Xstar$ uniformly at
random, where $m$ is roughly equal to $2 n r \log n$ with $n = 4000$ and $r =
3$. For simplicity, we feed \texttt{SVP},
\texttt{OptSpace} and \texttt{GD} with the true rank.  For all these methods, we
use the randomized algorithm of \citet{HalMarTro11} to compute the low rank SVD,
which is approximately 15 times faster than MATLAB built-in SVD on instances of
such size. We report
relative error measured in the Frobenius norm, defined as $\|\hat{X} -
    \Xstar \|_F / \norm{\Xstar}_F$.  For \texttt{nuclear}, we set $\lambda = 0$ to
enforce exact fitting. The convergence speed of ADMM mildly depends on the
choice of penalty parameter.  We tested $5$ values $0.1, 0.2, 0.5,
1, 1.5$ and selected $0.2$, which leads to fastest convergence. Similarly, for \texttt{SVP}, we would
like to choose the largest step size for which the algorithm is converging. We
evaluated $15, 20, 30, 35, 40$ and selected $30$. The step size is
chosen for \texttt{GD} in the same way. Five values $20, 50, 70, 75, 80$ are
tested for $\eta$ and we picked $70$. For \texttt{OptSpace}, we
compared fixed step sizes $0.5 0.1 0.05 0.01 0.005$ with line search,
and found the algorithm converged fastest under line search.
Figure~\ref{fig:runtime} shows the results. \texttt{GD} is slightly slower than
\texttt{trustRegion} and faster than competing approaches.

To further illustrate how runtime scales as the dimension increases,
we run larger instances of size $10000 \times
5000$ and $20000 \times 5000$, where the true rank is $40$. The parameters are selected in
the same manner, and we terminate the computation once the relative error is below
$1e^{-9}$. We report the results of \texttt{AltMin} \texttt{GD}, \texttt{SVP} and \texttt{trustRegion} in
Figure~\ref{fig:runtime_large}; \texttt{nuclear}, \texttt{OptSpace} do not scale well to such
sizes so that we didn't include them. The runtime of
\texttt{AltMin} scales the slowest, while the runtimes
of \texttt{GD} and \texttt{trustRegion} increase slower than \texttt{SVP}.

\paragraph{Sample Complexity}
We evaluate the number of observations required by \texttt{GD} for exact
recovery.
For simplicity, we consider square but asymmetric $\Xstar$.
We conducted experiments in 4 cases, where the randomly
generated $\Xstar$ is of
size $500 \times 500$ or $1000 \times 1000$, and of rank $10$ or $20$.
In each case, we compute the solutions of \texttt{GD} given $m$ random
observations, and a solution with relative error below $1e^{-6}$ is considered to
be successful. We run 20 trials and compute the empirical probability of successful recovery.
The results are shown in Figure~\ref{fig:gd_sample}. 
For all four cases, the phase transitions occur around $m \approx 3.5 n r$. 
This suggests that the actual sample complexity of \texttt{GD} may scale linearly with both the dimension $n$
and the rank $r$.

\begin{figure}[tb]
    \centering
    \subfloat[] {
        \hskip-20pt
        \includegraphics[width=0.45\textwidth]{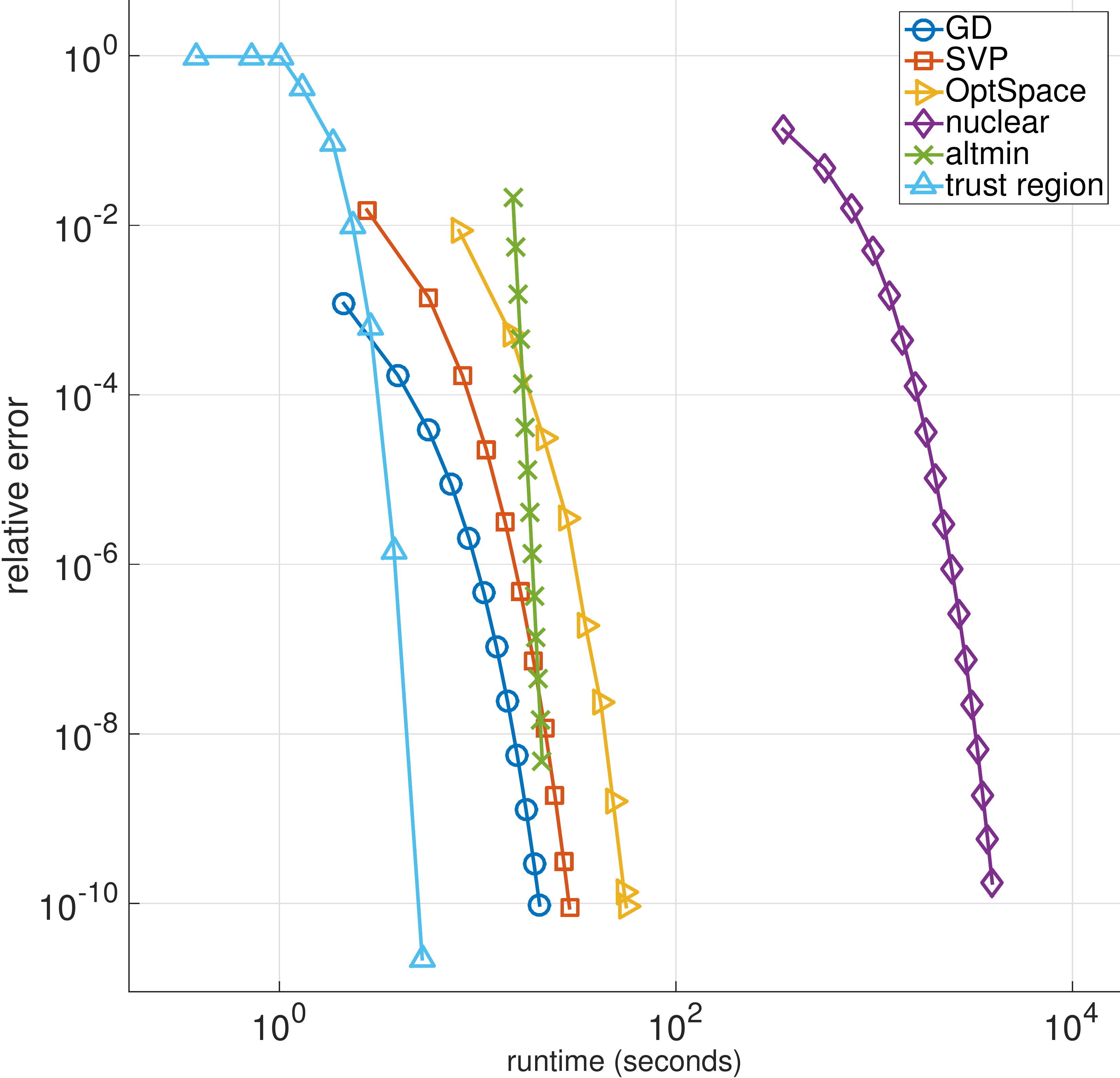}
        \label{fig:runtime}
    }
    \subfloat[] {
        \includegraphics[width=0.45\textwidth]{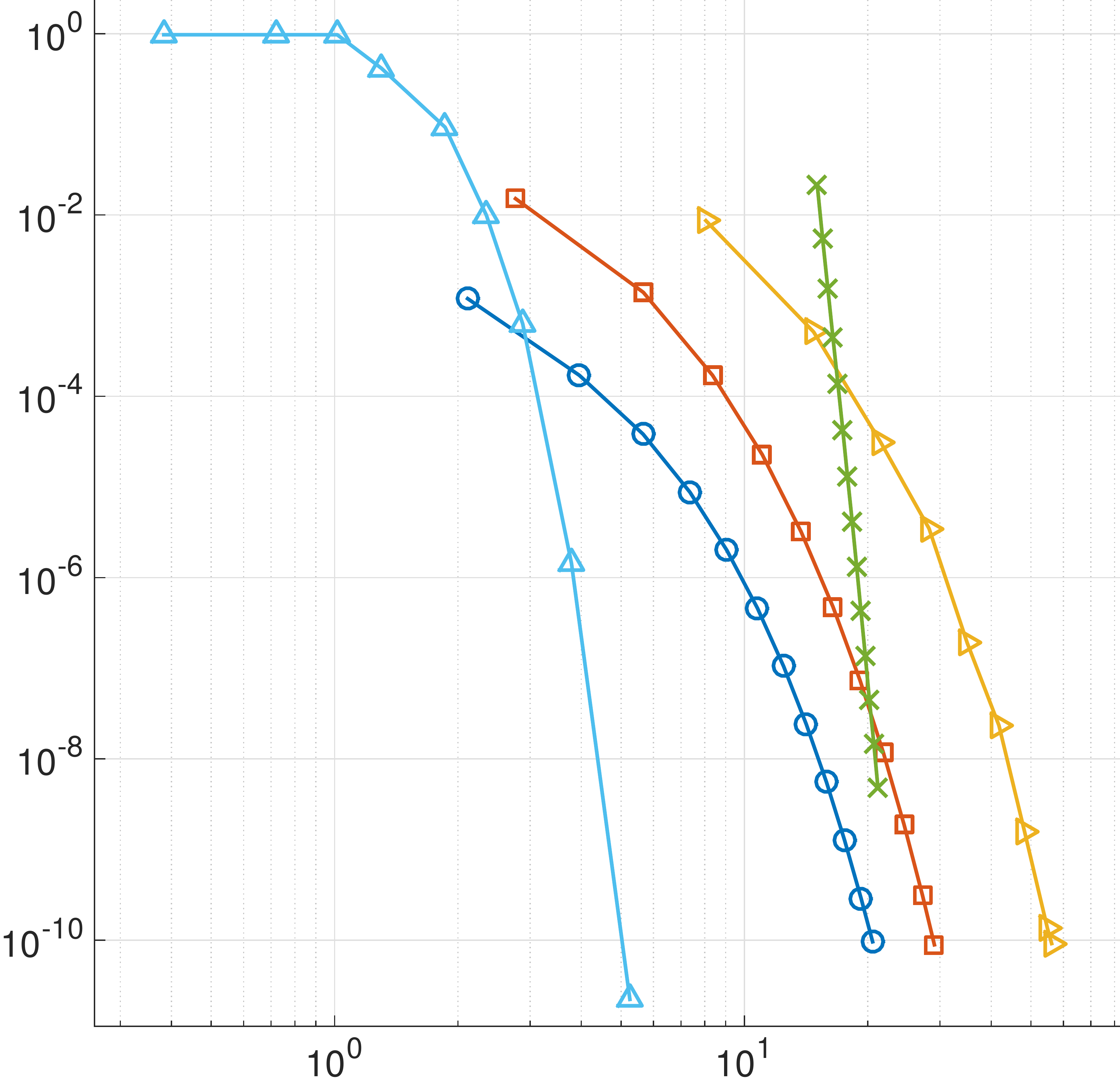}
        \label{fig:runtime_zoom}
    }
    \caption{(a) Runtime comparison where $\Xstar$ is $4000 \times 2000$ and
    of rank $3$. $199057$ entries are observed. (b) Magnified plots to compare
    other methods except \texttt{nuclear}.}
\end{figure}
\begin{figure}[tb]
    \subfloat[] {
        \hskip4pt
        \includegraphics[width=0.45\textwidth]{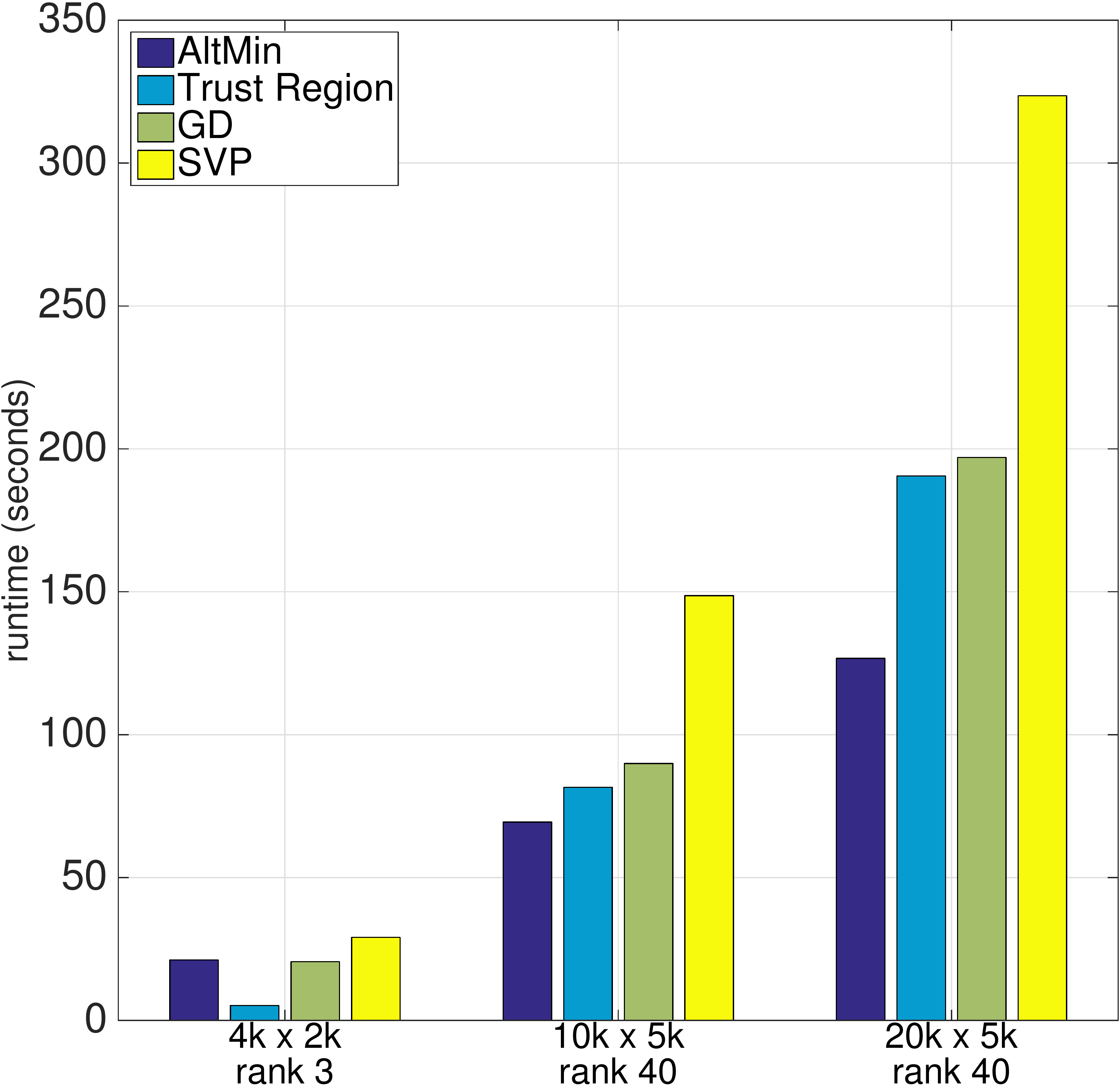}
        \label{fig:runtime_large}
    }
    \subfloat[]{
        \hskip4pt
        \includegraphics[width=0.45\textwidth]{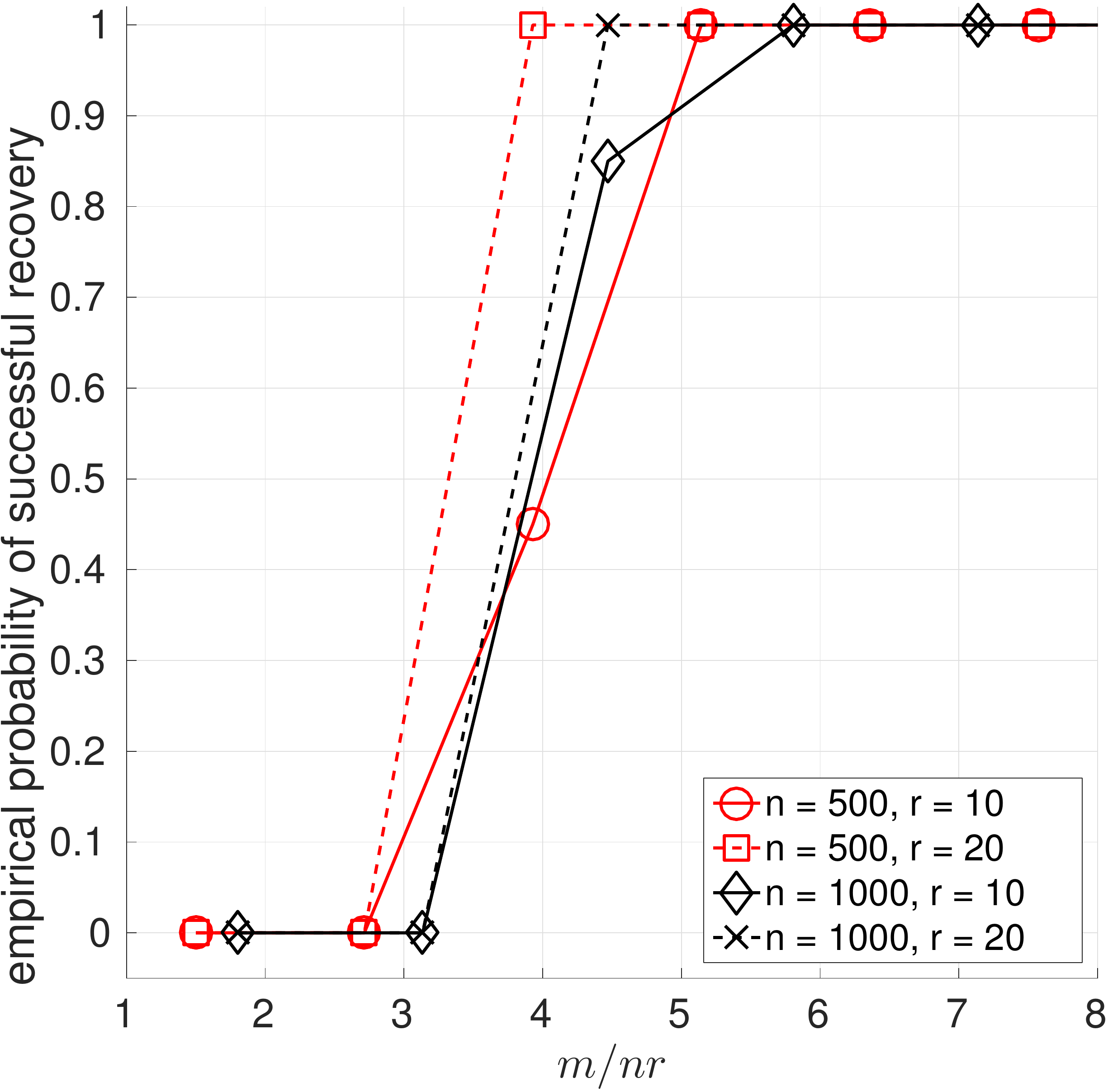}
        \label{fig:gd_sample}
    }
    \caption{ (a) Runtime growth of \texttt{AltMin}, \texttt{trustRegresion}, \texttt{GD} and \texttt{SVP}. (b) Sample complexity of gradient scheme. }
\end{figure}

\section{Conclusion}
We propose a lifting procedure together with Burer-Monteiro
factorization and a first-order algorithm to carry out rectangular
matrix completion.  While optimizing a nonconvex objective, we
establish linear convergence of our method to the global optimum with
$O(\mu r^2 \kappa^2 n \max(\mu, \log n))$ random observations. We conjecture that
$O(n r)$ observations are sufficient for exact recovery, and that the
column space regularizer can be dropped. We provide empirical evidence
showing this simple algorithm is fast and scalable, suggesting that lifting techniques
may be promising for much more general classes of problems.
\label{sec:conclude}

\section*{Acknowledgements}
The authors thank Rina Foygel Barber, Yudong Chen and Ruoyu Sun for helpful
comments. Research supported in part by ONR grant N00014-15-1-2379 and NSF grant DMS-1513594.

\appendix
\section{Technical Lemmas}
Another way of writing the objective function is
\[
    f(Z) = \frac{1}{2p} \sum_{l = 1}^{2m} \left( \ip{A_l, Z Z^\T} - b_l \right)^2 + \frac{ \lambda }{4} \norm{Z^\T D Z}^2_F,
\]
where $l$ is an index of $\OmegaY$, $A_l$ is a
matrix with $1$ at the corresponding observed entry and $0$ elsewhere.
Let $H = Z - \Zbar$, the gradient can be written as
\[
    \begin{aligned}
        \nabla f(Z) & = \frac{1}{p} \sum_{l = 1}^{2m} \left( \ip{A_l, Z Z^\T} - b_l \right) ( A_l + A^\T_l ) Z
            + \lambda \overbrace{ D Z  \left( Z^\T D Z \right) }^\Gamma \\
            & =  \frac{1}{p} \sum_{l = 1}^{2m} \left( \ip{A_l, H {\Zbar}^\T + \Zbar H^\T + H H^\T } \right) ( A_l + A^\T_l ) (\Zbar + H)
            + \lambda \Gamma. \\
    \end{aligned}
\]

We will use the following facts throughout the proof:
\begin{align}
    \label{eq:Zbar_2_infty}
    & \norm{\Zbar}_{2, \infty} = \norm{\Zstar}_{2,\infty} \leq  \sqrt{ \frac{\mu
            r}{\nmin} \sstar_1 },\\
    \label{eq:H_2_infty}
    & \norm{H}_{2, \infty} \leq  3 \sqrt{ \frac{\mu r}{\nmin} \sstar_1 }, \\
    \label{eq:trace_symm}
    & \ip{(A_l + A^\T_l)B, C} = \ip{A_l, BC^\T + CB^\T}, \\
    \label{eq:H'Z_symm}
    & Z^\T \Zbar \; \text{is positive semidefinite,} \; H^\T \Zbar \; \text{is symmetric.}
\end{align}

Inequality~\eqref{eq:Zbar_2_infty} is a direct result of Definition ~\ref{defn:incoh}.
To see \eqref{eq:H_2_infty}, note that
$ \norm{H}_{2, \infty} \leq  \norm{Z}_{2, \infty} + \norm{\Zbar}_{2, \infty}
\leq  \sqrt{ \frac{2 \mu r}{\nmin} \sigma_1 } + \sqrt{ \frac{\mu r}{\nmin} \sstar_1 }$,
and $| \sigma_1 - \sstar_1 | \leq \frac{1}{16}\sstar_1$ by the discussion of
initialization in Appendix~\ref{sec:proof_init}.
For \eqref{eq:H'Z_symm}, it holds that
\[ \argmin_{RR^\T = R^\T R = I} \norm{Z - \Zstar R}^2_F = AB^\T, \]
 where $A \Lambda B^\T$ is the SVD of ${\Zstar}^\T Z$.
Clearly, $Z^\T \Zbar$ is positive semidefinite,  and $H^\T \Zbar = Z^\T \Zbar - {\Zbar}^\T \Zbar = B \Lambda B^\T -
{\Zbar}^\T \Zbar$ is symmetric.

Next, we list several technical lemmas that are utilized later.
We will use $c$ to denote a numerical constant, whose value may vary from line to line.
\begin{lemma}
    \label{lem:ZZ_to_X}
    For any $Z$ of the form
    $Z  =  \begin{bmatrix} Z_U \\ Z_V \end{bmatrix} = \begin{bmatrix} U \Shalf R \\ V \Shalf R \end{bmatrix}$,
where $U, V, R$ are unitary matrices and $\Sigma \succeq 0$ is a diagonal matrix, we
have
    \[    \norm{ZZ^\T - \Zstar {\Zstar}^\T}_F \leq 2 \norm{U\Sigma V^\T - \Ustar \Sstar {\Vstar}^\T}_F. \]
\end{lemma}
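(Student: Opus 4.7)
The plan is to expand both sides in terms of $U,V,\Sigma,U^\star,V^\star,\Sigma^\star$, note that the rotation $R$ cancels out of $ZZ^\T$, and then reduce the desired inequality to a single non-negativity statement about a trace of a PSD matrix.

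First I would compute the block decomposition
\[
ZZ^\T = \begin{bmatrix} U\Sigma U^\T & U\Sigma V^\T \\ V\Sigma U^\T & V\Sigma V^\T \end{bmatrix},
\qquad
\Zstar {\Zstar}^\T = \begin{bmatrix} \Ustar\Sstar\Ustar{}^\T & \Ustar\Sstar\Vstar{}^\T \\ \Vstar\Sstar\Ustar{}^\T & \Vstar\Sstar\Vstar{}^\T \end{bmatrix}.
\]
Because $RR^\T = I$, the factor $R$ cancels everywhere, and the Frobenius norm of the difference splits as
\[
\|ZZ^\T - \Zstar\Zstar{}^\T\|_F^2 \;=\; \Delta_U^2 + 2\,\Delta_{UV}^2 + \Delta_V^2,
\]
where $\Delta_U = \|U\Sigma U^\T - \Ustar\Sstar\Ustar{}^\T\|_F$, $\Delta_V = \|V\Sigma V^\T - \Vstar\Sstar\Vstar{}^\T\|_F$, and $\Delta_{UV} = \|U\Sigma V^\T - \Ustar\Sstar\Vstar{}^\T\|_F$. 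So the target inequality is equivalent to
\[
\Delta_U^2 + \Delta_V^2 \;\leq\; 2\,\Delta_{UV}^2.
\]

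The main (and only nontrivial) step is this last inequality. I would expand each term with the identity $\|A-B\|_F^2 = \|A\|_F^2 + \|B\|_F^2 - 2\ip{A,B}$, using $U^\T U = I$, $V^\T V = I$, $\Ustar{}^\T\Ustar = I$, $\Vstar{}^\T\Vstar = I$. All four ``diagonal'' $\|\cdot\|_F^2$ terms contribute the same quantity $\trace(\Sigma^2) + \trace(\Sstar{}^2)$ on both sides, so after cancellation the inequality reduces to a cross-term comparison
\[
2\,\trace\!\big(\Sigma\, U^\T\Ustar\,\Sstar\,\Vstar{}^\T V\big) \;\leq\; \trace\!\big(\Sigma\,U^\T\Ustar\,\Sstar\,\Ustar{}^\T U\big) + \trace\!\big(\Sigma\,V^\T\Vstar\,\Sstar\,\Vstar{}^\T V\big).
\]
Introducing $P = U^\T\Ustar\,\Sstarhalf$ and $Q = V^\T\Vstar\,\Sstarhalf$, this is
\[
2\,\trace(\Sigma P Q^\T) \;\leq\; \trace(\Sigma P P^\T) + \trace(\Sigma Q Q^\T),
\]
which after rearrangement becomes $\trace\!\big(\Sigma (P-Q)(P-Q)^\T\big) \geq 0$. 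Since $\Sigma \succeq 0$ is diagonal, $\trace(\Sigma MM^\T) = \|\Shalf M\|_F^2 \geq 0$ for any $M$, so this holds trivially with $M = P-Q$.

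The only thing to watch for is the step of collecting the four block Frobenius norms and identifying that $R$ drops out; beyond that the argument is purely algebraic and I expect no real obstacle. The key insight is the substitution $P = U^\T\Ustar\Sstarhalf$, $Q = V^\T\Vstar\Sstarhalf$, which converts the apparently asymmetric inequality into a recognizable ``completing the square'' form whose non-negativity is immediate from $\Sigma \succeq 0$.
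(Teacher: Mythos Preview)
Your proposal is correct and follows essentially the same route as the paper: both reduce to the block identity $\|ZZ^\T - \Zstar\Zstar{}^\T\|_F^2 = \Delta_U^2 + 2\Delta_{UV}^2 + \Delta_V^2$ and then show $\Delta_U^2 + \Delta_V^2 \leq 2\Delta_{UV}^2$ via the same non-negativity. The only cosmetic difference is that the paper expands the key step elementwise as $\sum_{i,k}\sigma_i\sstar_k\big((U^\T\Ustar)_{ik}^2 + (V^\T\Vstar)_{ik}^2\big) \geq 2\sum_{i,k}\sigma_i\sstar_k(U^\T\Ustar)_{ik}(V^\T\Vstar)_{ik}$, whereas you package the same inequality as $\trace\!\big(\Sigma(P-Q)(P-Q)^\T\big)\geq 0$ with $P = U^\T\Ustar\Sstarhalf$, $Q = V^\T\Vstar\Sstarhalf$.
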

\begin{proof}
Recall that
\[
    \Zstar  =  \begin{bmatrix} \Zstar_U \\ \Zstar_V \end{bmatrix} =
    \begin{bmatrix} \Ustar
    \Sstarhalf \\ \Vstar \Sstarhalf \end{bmatrix}
\]
where $\Xstar = \Ustar \Sstar {\Vstar}^\T$.
We have
\begin{equation}
    \begin{aligned}
    \label{eq:ZZ_to_X_helper0}
     & && \hskip-30pt \norm{ZZ^\T - \Zstar {\Zstar}^\T}^2_F\\
     & = && \norm{U\Sigma U^\T - \Ustar \Sstar {\Ustar}^\T }^2_F +
    \norm{V\Sigma V^\T - \Vstar \Sstar {\Vstar}^\T }^2_F +
    2 \norm{U\Sigma V^\T - \Ustar \Sstar {\Vstar}^\T }^2_F,
    \end{aligned}
\end{equation}
and
\begin{equation}
    \label{eq:ZZ_to_X_helper1}
    \begin{aligned}
        &    && \hskip-70pt \norm{U\Sigma U^\T - \Ustar \Sstar {\Ustar}^\T }^2_F + \norm{V\Sigma
    V^\T - \Vstar \Sstar {\Vstar}^\T }^2_F  \\
    & = &&2\left( \norm{\Sigma}^2_F + \norm{\Sstar}^2_F -  \ip{\Sigma, U^\T
    {\Ustar}^\T \Sstar {\Ustar}^\T U  + V^\T {\Vstar}^\T \Sstar {\Vstar}^\T V } \right)\\
    \end{aligned}.
\end{equation}
We can obtain the lower bound
\begin{equation}
    \label{eq:ZZ_to_X_helper2} \begin{aligned}
        &  && \hskip-1in \ip{\Sigma, \,  U^\T {\Ustar}^\T \Sstar {\Ustar}^\T U 
   + V^\T {\Vstar}^\T \Sstar {\Vstar}^\T V } \\
   &  = && \sum_{i=1}^r \sigma_i \left ( U^\T {\Ustar}^\T \Sstar {\Ustar}^\T U 
   + V^\T {\Vstar}^\T \Sstar {\Vstar}^\T V \right)_{ii}  \\ 
   &  = && \sum_{i=1}^r \sigma_i \sum_{k=1}^r \sstar_k  \left(  (U^\T \Ustar)^2_{ik}
   + (V^\T \Vstar)^2_{ik} \right) \\
   &  \geq && \sum_{i=1}^r \sigma_i \sum_{k=1}^r \sstar_k  \cdot 2 \, (U^\T \Ustar)_{ik}
    (V^\T \Vstar)_{ik}  \\
    &  = && 2 \ip{ \Sigma,  \, U^\T \Ustar \Sstar {\Vstar}^\T V }.
    \end{aligned}
\end{equation}
Combining \eqref{eq:ZZ_to_X_helper1} and \eqref{eq:ZZ_to_X_helper2}, we obtain
\begin{equation}
    \label{eq:ZZ_to_X_helper3}
    \begin{aligned}
        & &&  \hskip-1in \norm{U\Sigma U^\T - \Ustar \Sstar {\Ustar}^\T }^2_F + \norm{V\Sigma
    V^\T - \Vstar \Sstar {\Vstar}^\T }^2_F  \\
        & \leq &&  2\left( \norm{\Sigma}^2_F + \norm{\Sstar}^2_F -  2\ip{\Sigma, U^\T
    {\Ustar}^\T \Sstar {\Vstar}^\T V } \right)\\
        & = &&  2\left( \norm{U \Sigma V^\T }^2_F + \norm{\Ustar \Sstar
{\Vstar}^\T }^2_F -  2\ip{U\Sigma V^\T, {\Ustar}^\T \Sstar {\Vstar}^\T } \right)\\
    & = && 2 \norm{U \Sigma V^\T - \Ustar \Sstar {\Vstar}^\T }^2_F.
    \end{aligned}
\end{equation}
Plugging \eqref{eq:ZZ_to_X_helper3} back into \eqref{eq:ZZ_to_X_helper0}, we obtain the lemma.
\end{proof}

Recall that $n = \max(n_1, n_2)$.
We will exploit the following two known concentration results.
\begin{lemma}[\citet{Che15}, Lemma 2] 
    \label{lem:Chen_opnorm}
    For any fixed matrix $\Xstar \in \R^{n_1 \times n_2}$,
    there exist universal constants $c, c_1, c_2$ such that
    with probability at least $1 - c_1 n^{-c_2}$,
    \[  \norm{ p^{-1} \PP_{\Omega}(\Xstar) - {\Xstar} } \leq c \left( \frac{\log n}{p} \norm{\Xstar}_\infty 
                            + \sqrt{\frac{\log n}{p}} \norm{\Xstar}_{\infty, 2} \right).     \]
\end{lemma}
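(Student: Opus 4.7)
The plan is to apply the (non-commutative) matrix Bernstein inequality to the decomposition
\[
    p^{-1}\PP_\Omega(\Xstar) - \Xstar \;=\; \sum_{i=1}^{n_1}\sum_{j=1}^{n_2} S_{ij}, \qquad
    S_{ij} \;=\; \left(\frac{\delta_{ij}}{p}-1\right)\Xstar_{ij}\, e_i e_j^\T,
\]
where $\delta_{ij} \sim \Ber(p)$ are independent. Each $S_{ij}$ is zero-mean, and the summands are independent, so this is the standard setting for Bernstein-type control of the operator norm.

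The next step is to compute the two parameters required by matrix Bernstein. The almost-sure operator-norm bound is
$\norm{S_{ij}} \le \frac{1}{p}|\Xstar_{ij}| \le \frac{\norm{\Xstar}_\infty}{p}$, which will play the role of the $L$ parameter. For the variance, one observes that
\[
    \sum_{i,j} \E\bigl[S_{ij} S_{ij}^\T\bigr]
    \;=\; \frac{1-p}{p}\sum_{i} \Bigl(\sum_{j}(\Xstar_{ij})^2\Bigr)\, e_i e_i^\T,
\]
so its operator norm equals $\frac{1-p}{p}\max_i \norm{\Xstar_{i\cdot}}_2^2 \le \frac{1}{p}\norm{\Xstar}_{2,\infty}^2$. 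The symmetric computation of $\sum \E[S_{ij}^\T S_{ij}]$ yields $\frac{1}{p}\norm{(\Xstar)^\T}_{2,\infty}^2$, so the variance parameter is at most $\sigma^2 = \frac{1}{p}\norm{\Xstar}_{\infty,2}^2$.

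Finally, I would invoke matrix Bernstein in the form
\[
    \P\!\left(\,\Bigl\|\sum_{i,j} S_{ij}\Bigr\|\,\ge t\,\right) \;\le\; (n_1+n_2)\exp\!\left(-\frac{t^2/2}{\sigma^2 + Lt/3}\right).
\]
Setting $t = c\bigl(\sqrt{\sigma^2 \log n} + L\log n\bigr)$ and substituting the values of $\sigma^2$ and $L$ above yields exactly the stated bound with failure probability $c_1 n^{-c_2}$, by tuning $c$ to make the exponent at least $(c_2+1)\log n$.

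The routine part is the variance calculation; the only place one must be careful is in not letting the bound degrade to $\sqrt{n}\norm{\Xstar}_\infty$, which is why the variance has to be expressed via $\norm{\Xstar}_{\infty,2}$ (the larger of the maximum row and column $\ell_2$ norms) rather than via a naive pointwise bound. The sub-Gaussian term $\sqrt{\log n / p}\,\norm{\Xstar}_{\infty,2}$ and the sub-exponential term $(\log n / p)\norm{\Xstar}_\infty$ in the conclusion correspond exactly to the two regimes of the matrix Bernstein tail. The main obstacle, which is more bookkeeping than conceptual, is to verify the constants and union-bound factor in a way that absorbs the $\log(n_1+n_2)$ term into the polynomial failure probability $n^{-c_2}$.
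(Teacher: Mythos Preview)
The paper does not give its own proof of this lemma: it is quoted verbatim as a known concentration result from \citet{Che15} (see the sentence ``We will exploit the following two known concentration results'' preceding Lemma~\ref{lem:Chen_opnorm}). Your matrix-Bernstein argument is correct and is precisely the standard route used in the original reference, so there is nothing to compare against here.
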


\begin{lemma}[\citet{CanRec09}, Theorem 4.1]
    \label{lem:Rudelson}
Define subspace
\begin{equation}
    \label{eq:defn_T}
T = \set{M \in \R^{n_1 \times n_2} : M = \Ustar X^\T + Y {\Vstar}^\T \; \text{for some} \; X \; \text{and} \; Y}.
\end{equation}
Let $\PP_T$ be the Euclidean projection onto $T$.
There is a numerical constant $c$ such that for any $\delta \in (0, 1]$, if $p
\geq \dfrac{c}{\delta^2} \dfrac{\mu r \log n }{\nmin}$, then with probability $1 -
3n^{-3}$, we have
\[
    p^{-1}\norm{ \PP_T\PP_\Omega\PP_T - p \PP_T } \leq \delta.
\]
\end{lemma}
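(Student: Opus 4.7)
The plan is to view $\PP_\Omega$ as a random sum and apply an operator version of Bernstein's inequality to $\PP_T\PP_\Omega\PP_T - p\PP_T$. Concretely, let $\delta_{ij}\sim\Ber(p)$ independently, so that $\PP_\Omega = \sum_{(i,j)} \delta_{ij}\,\mathcal{R}_{ij}$ where $\mathcal{R}_{ij}(M) = \ip{e_i e_j^\T, M} e_i e_j^\T$ is the rank-one projector onto the $(i,j)$ coordinate. Then
\[
\PP_T \PP_\Omega \PP_T - p\,\PP_T \;=\; \sum_{(i,j)} (\delta_{ij} - p)\,\PP_T\,\mathcal{R}_{ij}\,\PP_T,
\]
a sum of independent, mean-zero, self-adjoint operators on $\R^{n_1\times n_2}$, because $\PP_T$ is an orthogonal projection and hence $\PP_T^2 = \PP_T$. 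The plan is then to apply the non-commutative (matrix) Bernstein inequality to this sum.

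First I would reduce everything to a uniform estimate of $\|\PP_T(e_i e_j^\T)\|_F^2$. Using the decomposition $\PP_T(e_i e_j^\T) = \Ustar{\Ustar}^\T e_i e_j^\T + e_i e_j^\T \Vstar{\Vstar}^\T - \Ustar{\Ustar}^\T e_i e_j^\T \Vstar{\Vstar}^\T$, together with the incoherence bounds $\|{\Ustar}^\T e_i\|_2^2 \leq \mu r/n_1$ and $\|{\Vstar}^\T e_j\|_2^2 \leq \mu r/n_2$, one obtains
\[
\|\PP_T(e_i e_j^\T)\|_F^2 \;\leq\; \frac{\mu r}{n_1} + \frac{\mu r}{n_2} \;\leq\; \frac{2\mu r}{\nmin}.
\]
Since $\PP_T\,\mathcal{R}_{ij}\,\PP_T$ is, as an operator, of the form $M \mapsto \ip{\PP_T(e_i e_j^\T), M}\,\PP_T(e_i e_j^\T)$, its operator norm equals $\|\PP_T(e_i e_j^\T)\|_F^2$. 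This immediately gives the uniform bound $B \leq 2\mu r/\nmin$ on each summand (up to the constant factor $\max(p, 1-p) \leq 1$).

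Next I would compute the matrix variance parameter. Using $(\PP_T\mathcal{R}_{ij}\PP_T)^2 = \|\PP_T(e_i e_j^\T)\|_F^2\,\PP_T\mathcal{R}_{ij}\PP_T$, the variance proxy is
\[
\Big\|\sum_{(i,j)} \E[(\delta_{ij}-p)^2]\,(\PP_T\mathcal{R}_{ij}\PP_T)^2\Big\|
\;\leq\; p\cdot\frac{2\mu r}{\nmin}\cdot\Big\|\sum_{(i,j)} \PP_T\mathcal{R}_{ij}\PP_T\Big\|
\;=\; p\cdot\frac{2\mu r}{\nmin},
\]
since $\sum_{(i,j)} \PP_T\mathcal{R}_{ij}\PP_T = \PP_T\PP_T = \PP_T$ has operator norm one. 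Plugging $B$ and $\sigma^2$ into the matrix Bernstein tail bound and setting the right-hand side equal to $p\delta$, the standard computation yields a failure probability at most $2(n_1+n_2)\exp\!\big(-c'\,p\delta^2\nmin/(\mu r)\big)$; requiring this to be at most $3n^{-3}$ gives the stated threshold $p \geq \frac{c}{\delta^2}\frac{\mu r \log n}{\nmin}$.

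The main obstacle, and the only place where the problem structure really enters, is the incoherence-based bound on $\|\PP_T(e_i e_j^\T)\|_F^2$; everything afterward is a mechanical application of matrix Bernstein. One subtlety worth handling carefully is that the operators live on $\R^{n_1\times n_2}$, so the dimensional factor in Bernstein is $n_1+n_2 \leq 2n$, which is what produces the $\log n$ in the sample complexity.
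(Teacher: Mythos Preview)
The paper does not give its own proof of this lemma; it simply cites it as Theorem~4.1 of \citet{CanRec09}, where the original argument proceeds via Rudelson's operator-valued moment inequality rather than matrix Bernstein. Your matrix-Bernstein proof is the now-standard route (cf.\ Recht 2011, Gross 2011, Tropp 2012) and is essentially correct and complete: the reduction to the scalar quantity $\|\PP_T(e_ie_j^\T)\|_F^2$, the incoherence bound $\|\PP_T(e_ie_j^\T)\|_F^2\le 2\mu r/(\nmin)$, and the variance computation via $\sum_{i,j}\PP_T\mathcal{R}_{ij}\PP_T=\PP_T$ are all right.

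One small correction: the summands $\PP_T\mathcal{R}_{ij}\PP_T$ are self-adjoint operators on the $n_1n_2$-dimensional space $\R^{n_1\times n_2}$, so the dimensional prefactor in matrix Bernstein is $n_1n_2$ (or, if one invokes the intrinsic-dimension version, the rank of the variance proxy), not $n_1+n_2$ as you wrote. This is harmless here because after taking logarithms one has $\log(n_1n_2)\le 2\log n$, which is absorbed into the universal constant $c$; the stated threshold $p\ge c\,\delta^{-2}\mu r\log n/(\nmin)$ and the failure probability $3n^{-3}$ are unchanged.
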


Lemma~\ref{lem:op_norm_random_graph} upper bounds the spectral
norm of the adjacency matrix of a random 
Erd\H{o}s-R\'enyi graph. It is a variant of Lemma
7.1 of \citet{KesMonOh09}, which uses known results of \citet{FeiOfe05}. 
\begin{lemma}[\citet{CheWai15}, Lemma 9]
    \label{lem:op_norm_random_graph}
    Suppose that $\bar{\Omega} \subset [d] \times [d]$ is the set of edges of a
    random Erd\H{o}s-R\'enyi graph with $n$ nodes, where any pair of nodes 
    is connected with probability $p$. There exists two numerical constants $c_1, c_2$ such that, 
    for any $\delta \in (0, 1]$, if $p \geq \dfrac{c_1 \log d}{\delta^2 d}$, 
    then with probability at least $1 - \frac{1}{2} d^{-4} $, uniformly for all
    $x, y \in \R^n$ it holds that
    \begin{equation}
        \label{eq:op_norm_random_graph}
      p^{-1} \sum_{(i,j) \in \bar{\Omega}} x_i y_j \leq (1+\delta) \norm{x}_1 \norm{y}_1 + c_2 \sqrt{ \frac{d}{p} } \norm{x}_2 \norm{y}_2.
    \end{equation}
\end{lemma}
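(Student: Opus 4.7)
The plan is to interpret the sum as a bilinear form in the random adjacency matrix and then combine two ingredients: the trivial mean bound in $\ell_1$, and a Feige--Ofek-style spectral concentration bound on the centered adjacency matrix. Let $A \in \{0,1\}^{d \times d}$ be the adjacency matrix of the Erd\H{o}s--R\'enyi graph, so that $\sum_{(i,j) \in \bar{\Omega}} x_i y_j = x^\T A y$. I would decompose $A = p J + (A - pJ)$, where $J = \mathbf{1}\mathbf{1}^\T$, to obtain
\[
p^{-1} x^\T A y = x^\T J y + p^{-1} x^\T (A - p J) y.
\]
The expectation piece is immediate: $x^\T J y = (\sum_i x_i)(\sum_j y_j) \leq \|x\|_1 \|y\|_1$, which yields the first term of the lemma with leading constant one; the $(1+\delta)$ slack will come from a minor diagonal correction (the graph has no self-loops, whereas $J$ does) and from absorbing a lower-order heavy-coordinate term in the next step.

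The core of the argument is the uniform fluctuation bound. If one can establish the operator-norm estimate $\|A - p J\|_{\mathrm{op}} \leq c \sqrt{p d}$ with the stated probability under $p \geq c_1 \log d / (\delta^2 d)$, then Cauchy--Schwarz gives
\[
p^{-1}\, x^\T (A - p J) y \leq p^{-1}\|A - pJ\|_{\mathrm{op}}\, \|x\|_2 \|y\|_2 \leq c \sqrt{\tfrac{d}{p}}\, \|x\|_2 \|y\|_2,
\]
which is the second term of the conclusion, valid uniformly in $x,y$ with no further union bound needed. So the task reduces to a sparse-graph spectral concentration inequality.

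Establishing that spectral bound is the main obstacle, because we are working in the sparse regime $p = \Theta(\log d / d)$ where the naive matrix Bernstein/Rudelson approach fails (the maximum entry of $A - pJ$ is $\Theta(1)$, but the variance is only $p$, so the naive bound loses a $\sqrt{\log d}$ factor). The standard remedy, due to Feige and Ofek, is a light/heavy dichotomy: split each unit vector into a ``light'' part with coordinates of size $O(1/\sqrt{d})$ and a ``heavy'' complement; the light-light bilinear forms are controlled on an $\varepsilon$-net via Bernstein's inequality, while the heavy-heavy contribution is handled by a discrepancy-style counting argument that rules out dense subgraphs with high probability. The constant $c_1/\delta^2$ in the sample-probability hypothesis is what one pays to make the net cardinality absorbable and to make the heavy-part contribution at most a $\delta$-fraction of the $\|x\|_1 \|y\|_1$ term.

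Accordingly, the cleanest write-up is to reduce the claim to the sparse-graph spectral concentration bound of Feige--Ofek (equivalently, Lemma 7.1 of \citet{KesMonOh09}, which is the version the statement is modeled on) and then deduce the bilinear inequality by the decomposition above. The only genuinely new step beyond citing that result is keeping track of (i) the diagonal correction between $A$ and $pJ$, which costs at most $p^{-1} \cdot p \|x\|_2 \|y\|_2 = \|x\|_2 \|y\|_2$ and is absorbed into $c_2 \sqrt{d/p}\,\|x\|_2\|y\|_2$ for $p \leq 1$, and (ii) the $(1+\delta)$ slack in front of $\|x\|_1\|y\|_1$, which comes from the heavy-coordinate piece of the Feige--Ofek analysis when restated as a rank-one bound rather than a spectral-norm bound.
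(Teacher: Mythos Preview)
Your proposal is correct and follows the same route the paper indicates: the paper does not give an independent proof but simply refers to Lemma~7.1 of \citet{KesMonOh09} (built on \citet{FeiOfe05}), noting only that taking $p$ sufficiently large forces the leading constant down to $1+\delta$. Your mean-plus-fluctuation decomposition $A = pJ + (A - pJ)$ together with the Feige--Ofek spectral bound is exactly that argument spelled out, and your explanation of where the $(1+\delta)$ slack arises matches the paper's remark.
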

We refer readers to \cite{KesMonOh09} for a complete proof, in particular
noticing that one can choose $p$ large enough so that the constant factor in the
first term in \eqref{eq:op_norm_random_graph} is only $1 + \delta$.

Lemma~\ref{lem:PP_HH}, \ref{lem:upper_PP_Y_AB'} and \ref{lem:rsc_chen} are direct
generalizations of Lemma 4 and 5 of \cite{CheWai15}.

\begin{lemma} There exists a constant $c$ such that, for any $\delta \in (0,
1]$, if $p \geq \frac{c}{\delta^2} \max\left(\frac{\log (n_1 + n_2)}{n_1 + n_2}, \frac{\mu^2 r^2
    \kappa^2}{\nmin}\right)$, then with probability at least $1 - \frac{1}{2} (n_1+n_2)^{-4}$,
    uniformly for all $H$ such that $\norm{H}_{2, \infty} \leq 3 \sqrt{
    \frac{\mu r }{\nmin}\sstar_1 }$, we have
\[
     p^{-1} \norm{\PP_{\OmegaY}(HH^\T)}^2_F \leq (1 + \delta)  \norm{H}^4_F + \delta \sstar_r \norm{H}^2_F.
\]
\label{lem:PP_HH}
\end{lemma}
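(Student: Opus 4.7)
The plan is to reduce $\norm{\PP_{\OmegaY}(HH^\T)}_F^2$ to a quadratic form $\sum x_i y_j$ in nonnegative row-norm vectors, then invoke the Erd\H{o}s-R\'enyi bound of Lemma~\ref{lem:op_norm_random_graph}. First, exploit the block structure: since $\OmegaY$ lives in the two off-diagonal blocks of $\Ystar$ (each $(i,j) \in \Omega$ contributing both $(i, n_1+j)$ and $(n_1+j, i)$ by symmetry), one has
\[
\norm{\PP_{\OmegaY}(HH^\T)}_F^2 \;=\; 2\sum_{(i,j)\in\Omega} \ip{H_{U,(i)},\, H_{V,(j)}}^2 \;\leq\; 2\sum_{(i,j)\in\Omega} \norm{H_{U,(i)}}_2^2 \norm{H_{V,(j)}}_2^2,
\]
where the inequality is Cauchy-Schwarz. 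Setting $x_i = \norm{H_{U,(i)}}_2^2$ and $y_j = \norm{H_{V,(j)}}_2^2$, the task becomes bounding $p^{-1}\sum_{(i,j)\in\Omega} x_iy_j$ uniformly over all nonnegative vectors arising from admissible $H$.

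Next, concatenate $x$ and $y$ into a single nonnegative vector on $[n_1+n_2]$ and apply Lemma~\ref{lem:op_norm_random_graph} with $d = n_1+n_2$ and deviation parameter $\delta' = \Theta(\delta)$. Since $\OmegaY$ is a random symmetric subgraph of the full Erd\H{o}s-R\'enyi graph on $n_1+n_2$ nodes (we can embed into the full graph, which only enlarges the sum), the hypothesis $p \geq c\log(n_1+n_2)/(\delta'^2(n_1+n_2))$ — the first branch of the sample complexity — yields, with probability at least $1-\tfrac12(n_1+n_2)^{-4}$ and uniformly in nonnegative $(x,y)$,
\[
2p^{-1}\!\sum_{(i,j)\in\Omega} x_iy_j \;\leq\; (1+\delta')\bigl(\norm{x}_1+\norm{y}_1\bigr)^2 \;+\; c_2\sqrt{\tfrac{n_1+n_2}{p}}\bigl(\norm{x}_2^2+\norm{y}_2^2\bigr).
\]

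Finally, translate back. Since $\norm{x}_1 = \norm{H_U}_F^2$ and $\norm{y}_1 = \norm{H_V}_F^2$, the $\ell_1$ term equals $(1+\delta')\norm{H}_F^4$, which absorbs into the advertised $(1+\delta)\norm{H}_F^4$. For the $\ell_2$ term, the incoherence constraint $\norm{H}_{2,\infty} \leq 3\sqrt{\mu r\sstar_1/\nmin}$ gives
\[
\norm{x}_2^2 \;=\; \sum_i \norm{H_{U,(i)}}_2^4 \;\leq\; \norm{H_U}_{2,\infty}^2\, \norm{H_U}_F^2 \;\leq\; 9\,\tfrac{\mu r}{\nmin}\sstar_1\, \norm{H_U}_F^2,
\]
and analogously for $\norm{y}_2^2$, so their sum is at most $9(\mu r\sstar_1/\nmin)\norm{H}_F^2$. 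Requiring the resulting tail contribution to be at most $\delta\sstar_r\norm{H}_F^2$ gives the condition $9c_2\sqrt{(n_1+n_2)/p}\cdot \mu r\kappa/\nmin \leq \delta$, which, after using $n_1+n_2 \leq 2n$ and $\nmin \leq n$, matches the second branch $p \geq c\mu^2 r^2\kappa^2/(\delta^2 \nmin)$ in the hypothesis.

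The main subtleties, rather than obstacles, are twofold. First, Lemma~\ref{lem:op_norm_random_graph} is stated for a symmetric Erd\H{o}s-R\'enyi graph while $\OmegaY$ is bipartite-structured; this is dispatched by the embedding noted above, which only inflates the sum. Second, the claim demands uniformity over \emph{all} $H$ in the incoherence ball, but the reduction $H \mapsto (x,y)$ only uses row norms, and Lemma~\ref{lem:op_norm_random_graph} already supplies uniformity over all nonnegative $(x,y)$; therefore the uniform-in-$H$ statement transfers for free. The one place that requires bookkeeping is conversion from $\sstar_1$ (appearing through the incoherence bound) to $\sstar_r$ (appearing on the right-hand side), which is precisely where the factor $\kappa^2$ enters the sample complexity.
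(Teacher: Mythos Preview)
Your proposal is correct and follows essentially the same route as the paper: reduce to row norms via Cauchy--Schwarz, invoke Lemma~\ref{lem:op_norm_random_graph} on the $(n_1+n_2)$-node graph, and use the incoherence bound on $\norm{H}_{2,\infty}$ to control the $\ell_2$ tail. The only cosmetic difference is that you first split $H$ into $H_U,H_V$ and then re-concatenate, whereas the paper works directly with the full vector of row norms $\bigl(\norm{H_{(i)}}_2^2\bigr)_{i\in[n_1+n_2]}$; your embedding remark (adding independent edges in the diagonal blocks only enlarges a nonnegative sum) is in fact a slightly more explicit justification of the paper's one-line ``$\OmegaY$ is a reduced sampling\ldots\ Lemma~\ref{lem:op_norm_random_graph} is applicable.''
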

\begin{proof}
    It holds that
    \begin{equation}
        \begin{aligned} 
            p^{-1} \norm{\PP_{\OmegaY}(HH^\T)}^2_F & = && p^{-1} \sum_{(i,j) \in \OmegaY} \ip{H_{(i)}, H_{(j)}}^2 \\
            & \leq && p^{-1} \sum_{(i,j) \in \OmegaY} \norm{H_{(i)}}^2_2 \norm{H_{(j)}}^2_2. \\
        \end{aligned} 
    \end{equation}
    Since $\OmegaY$ is a reduced sampling of $Y \in \R^{(n_1 + n_2) \times (n_1 + n_2)}$ under
    a Bernoulli model, Lemma~\ref{lem:op_norm_random_graph} is applicable here.
    Assume $p \geq \frac{c_1 \log (n_1 + n_2) }{\delta^2 (n_1 + n_2)}$, we then have with
    probability at least $1 - \frac{1}{2}(n_1+n_2)^{-4}$,  for all $H$ such that
    $\norm{H}_{2, \infty} \leq 3 \sqrt{ \frac{\mu r}{\nmin}\sstar_1 }$, 
    \begin{equation}
        \begin{aligned} 
            p^{-1} \norm{\PP_{\OmegaY}(HH^\T)}^2_F
            & \leq && p^{-1} \sum_{(i,j) \in \OmegaY} \norm{H_{(i)}}^2_2 \norm{H_{(j)}}^2_2 \\
            & \stackrel{(a)}{\leq} && (1+\delta)  \Big( \sum_{ i \in [n_1+n_2] }\norm{H_{(i)}}^2_2
                \Big)^2 + c_2 \sqrt{\frac{n_1+n_2}{p}} \sum_{i \in [n_1+n_2] } \norm{H_{(i)}}^4_2\\
            & \leq && (1+\delta) \norm{H}^4_F + c_2 \sqrt{\frac{n_1+n_2}{p}}
            \norm{H}^2_F \norm{H}^2_{2, \infty} \\
            & \stackrel{(b)}{\leq} &&  \norm{H}^2_F \Big( (1+\delta) \norm{H}^2_F + \sqrt{
            \frac{ 81 c^2_2 \mu^2 r^2 {\sstar_1}^2 (n_1 + n_2) }{p (\nmin)^2 }
            }\Big),
        \end{aligned}
    \end{equation}
    where $(a)$ follows from Lemma~\ref{lem:op_norm_random_graph} and
    $(b)$ follows from $\norm{H}_{2, \infty} \leq 3 \sqrt{ \frac{\mu
    r}{\nmin}\sstar_1 }$.
    
    Let us further assume $p \geq \frac{162 c^2_2 \mu^2 r^2 \kappa^2 \gamma
    }{\delta^2 (\nmin)}$, where $\gamma = n / (\nmin)$ is a fixed constant,
    then we can bound
    \begin{equation}
        \begin{aligned} 
            p^{-1} \norm{\PP_{\OmegaY}(HH^\T)}^2_F 
            & \leq && \norm{H}^2_F  \left( (1+\delta)  \norm{H}^2_F + \delta
                \sstar_r \right) .
        \end{aligned} 
    \end{equation}
    The final threshold we obtain is thus $ p \geq \frac{c}{\delta^2} \max\left(\frac{
    \log (n_1 + n_2)}{n_1 + n_2}, \frac{\mu^2 r^2 \kappa^2}{\nmin} \right)$ for some constant $c$.
\end{proof}

\begin{lemma} 
\label{lem:upper_PP_Y_AB'}
There exists a constant $c$, if $p \geq \dfrac{c \log n}{\nmin}$, 
then with probability at least $1 - 2n_1^{-4} - 2n_2^{-4}$, uniformly for all
matrices $A$, $B$ such that $AB^\T$ is of size $(n_1 + n_2)
\times (n_1 + n_2)$,
\[
p^{-1} \norm{\PP_{\OmegaY}(AB^\T)}^2_F \leq 2n \min \Big\{ \norm{A}^2_F \norm{B}^2_{2,\infty}, \norm{B}^2_F \norm{A}^2_{2, \infty} \Big\}
\]
\end{lemma}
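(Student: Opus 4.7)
The plan is to decouple the random observation pattern from the matrices via Cauchy--Schwarz, reducing the estimate to a uniform bound on the row and column degrees of the bipartite sampling graph. Starting from
\[
p^{-1} \norm{\PP_{\OmegaY}(AB^\T)}^2_F = p^{-1} \sum_{(i,j) \in \OmegaY} \ip{A_{(i)}, B_{(j)}}^2 \leq p^{-1} \sum_{(i,j) \in \OmegaY} \norm{A_{(i)}}^2_2 \norm{B_{(j)}}^2_2,
\]
I would pull out $\norm{B_{(j)}}^2_2 \leq \norm{B}^2_{2,\infty}$, converting the double sum into $\norm{B}^2_{2,\infty} \sum_i d_i^{\mathrm{r}} \norm{A_{(i)}}^2_2$, where $d_i^{\mathrm{r}}$ denotes the degree of row $i$ in the graph induced by $\OmegaY$. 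By symmetry, pulling out $\norm{A_{(i)}}^2_2 \leq \norm{A}^2_{2,\infty}$ instead yields the companion bound involving column degrees $d_j^{\mathrm{c}}$, and taking the minimum of the two estimates gives the two arms of the stated inequality.

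The main step is a simultaneous uniform bound $\max_i d_i^{\mathrm{r}} \leq 2pn$ and $\max_j d_j^{\mathrm{c}} \leq 2pn$ with high probability. Because $\OmegaY$ is empty on the diagonal blocks of $Y^\star$, a row $i \leq n_1$ has $d_i^{\mathrm{r}} \sim \mathrm{Bin}(n_2, p)$ while a row $i > n_1$ has $d_i^{\mathrm{r}} \sim \mathrm{Bin}(n_1, p)$, and the analogous dichotomy holds for columns. A multiplicative Chernoff bound gives a per-vertex tail of order $\exp(-c'p\nmin)$, and under the hypothesis $p \geq c\log n / \nmin$ a sufficiently large constant $c$ makes this small enough that a union bound over the $n_1$ top rows, $n_2$ bottom rows, $n_1$ left columns, and $n_2$ right columns yields exactly the failure probability $2n_1^{-4} + 2n_2^{-4}$ claimed in the lemma. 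On the resulting good event, $\sum_i d_i^{\mathrm{r}} \norm{A_{(i)}}^2_2 \leq 2pn \norm{A}^2_F$, and dividing by $p$ produces the advertised $2n \norm{A}^2_F \norm{B}^2_{2,\infty}$.

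A pleasant feature of this strategy is that the concentration event depends only on $\OmegaY$, so the bound holds uniformly for every admissible $A, B$ without invoking any $\varepsilon$-net or covering argument: the control is on the sampling graph itself, not on the matrices. In contrast to Lemma~\ref{lem:PP_HH}, the refined two-term inequality of Lemma~\ref{lem:op_norm_random_graph} is not required, because $\norm{B}_{2,\infty}^2$ already provides a tight pointwise bound on $\norm{B_{(j)}}^2_2$ after the Cauchy--Schwarz step. The main subtlety worth watching is the asymmetry of $\OmegaY$: the relevant binomials have $n_1$ or $n_2$ trials rather than $n_1+n_2$, which is precisely why the sample threshold $p \geq c\log n / \nmin$---rather than $c\log n / n$---is what the Chernoff step demands on both sides.
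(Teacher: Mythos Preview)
Your proposal is correct and matches the paper's proof essentially line for line: Cauchy--Schwarz on each entry, pull out the $\ell_{2,\infty}$ factor, reduce to a uniform bound on the row (respectively column) degrees of $\OmegaY$, and control those via a binomial tail bound under $p \geq c\log n/(\nmin)$, with the two halves of the union bound contributing $n_1^{-4}+n_2^{-4}$ each. Your observation about the block structure forcing $\mathrm{Bin}(n_2,p)$ and $\mathrm{Bin}(n_1,p)$ degrees is exactly the point the paper makes to justify the $\nmin$ in the threshold.
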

\begin{proof}
    Let $\Omega_{Y_i} = \set{j: (i,j) \in \OmegaY}$ denote the set of entries sampled in the $i$th row of
    $AB^\T$. Note that because of the structure of $\OmegaY$, at most $n_2$
    entries are sampled at the frist $n_1$ rows, and at most $n_1$ entries are
    sampled at the rest $n_2$ rows.

    Using a binomial tail bound, if $p \geq \dfrac{c \log n_2}{n_2}$ for
    sufficiently large $c$, the event $\max_{i \in [n_1]} |\Omega_{Y_i}|
    \leq 2pn_2$ holds with probability at least $1 - n_2^{-4}$. Similarly for
    the rest $n_2$ rows. Hence, if $p \geq \dfrac{c \log n}{\nmin}$ for some
    constant $c$, with probability at least $1 - n_1^{-4} - n_2^{-4}$, we have
    $\max_{i \in [n_1 + n_2]} |\Omega_{Y_i}| \leq 2pn$.
    
    Conditioning on this event, we then have for all $A, B$ of proper size,
    \[
        \begin{aligned}
            p^{-1} \norm{\PP_{\OmegaY}(AB^\T)}^2_F
            & =    && p^{-1} \sum_{i=1}^{n_1 + n_2} \sum_{j \in \Omega_{Y_i}} \ip{A_{(i)}, B_{(j)}}^2 \\
            & \leq && p^{-1} \sum_{i=1}^{n_1 + n_2} \norm{A_{(i)}}^2_2 \sum_{j \in \Omega_{Y_i}} \norm{B_{(j)}}^2_2 \\
            & \leq && p^{-1} \sum_{i=1}^{n_1 + n_2} \norm{A_{(i)}}^2_2 \max_{i \in [n_1+n_2]} | \Omega_{Y_i} | \norm{B}^2_{2, \infty}\\
            & \leq && 2n \norm{A}^2_F \norm{B}^2_{2, \infty}.
        \end{aligned}
    \]
    Similarly we can prove with probability at least $1 - n_1^{-4} - n_2^{-4}$,
    \[
        p^{-1} \norm{\PP_{\OmegaY}(AB^\T)}^2_F \leq 2n \norm{B}^2_F \norm{A}^2_{2, \infty}.
    \]
\end{proof}

The following lemma establishes restricted strong convexity and smoothness of
the observation operator for matrices in $T$.
\begin{lemma}
\label{lem:rsc_chen}
Let $T$ be the subspace defined in \eqref{eq:defn_T}.
There exists a universal constant $c$ such that, if $p \geq \dfrac{ c
}{\delta^2} \dfrac{ \mu r \log n}{ \nmin }$, with probability at least $1 - 3n^{-3}$,
uniformly for all $A \in T$, we have
\begin{equation}
    \label{eq:rsc_chen_1}
    p(1-\delta)\norm{A}^2_F \leq \norm{\PP_\Omega(A)}^2_F \leq p(1+\delta)\norm{A}^2_F.
\end{equation}
Consequently, uniformly for all $A, B \in T$,
\begin{equation}
    \label{eq:rsc_chen_2}
    \abs{p^{-1} \ip{\PP_{\Omega}(A),  \PP_{\Omega}(B)} -  \ip{A,  B} }  \leq
    \delta \norm{A}_F \norm{B}_F.
\end{equation}
\end{lemma}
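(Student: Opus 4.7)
\medskip

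\noindent\textbf{Proof proposal.} The plan is to reduce both statements to the operator-norm bound on $\PP_T\PP_\Omega\PP_T - p\PP_T$ supplied by Lemma~\ref{lem:Rudelson}. The key structural facts I will use are that $\PP_T$ is the orthogonal projection onto $T$, so $\PP_T(A) = A$ for every $A\in T$, and that $\PP_\Omega$ is an orthogonal projection onto the coordinate subspace indexed by $\Omega$, hence is self-adjoint and idempotent.

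First I will prove inequality~\eqref{eq:rsc_chen_1}. Fix $\delta \in (0,1]$ and choose the hidden constant $c$ so that the hypothesis of Lemma~\ref{lem:Rudelson} is met with this $\delta$. Then with probability at least $1 - 3n^{-3}$,
\[
\|\PP_T\PP_\Omega\PP_T - p\PP_T\|_{\mathrm{op}} \;\leq\; p\delta .
\]
For any $A \in T$, using $\PP_\Omega^2 = \PP_\Omega = \PP_\Omega^*$ and $\PP_T(A)=A$,
\[
\|\PP_\Omega(A)\|_F^2 \;=\; \ip{A,\PP_\Omega(A)} \;=\; \ip{A,\PP_T\PP_\Omega\PP_T(A)} .
\]
Subtracting $p\|A\|_F^2 = \ip{A,p\PP_T(A)}$ and applying Cauchy--Schwarz together with the operator-norm bound yields
\[
\bigl|\,\|\PP_\Omega(A)\|_F^2 - p\|A\|_F^2\,\bigr|
\;=\;\bigl|\ip{A,(\PP_T\PP_\Omega\PP_T - p\PP_T)(A)}\bigr|
\;\leq\; p\delta \,\|A\|_F^2 ,
\]
which is exactly \eqref{eq:rsc_chen_1}. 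Note that this holds uniformly in $A$ on the event supplied by Lemma~\ref{lem:Rudelson}, since that event concerns the operator itself.

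Next I will deduce \eqref{eq:rsc_chen_2} on the same event. For $A,B\in T$, the same self-adjointness and idempotence of $\PP_\Omega$ together with $\PP_T(A)=A$, $\PP_T(B)=B$ give
\[
\ip{\PP_\Omega(A),\PP_\Omega(B)} \;=\; \ip{A,\PP_\Omega(B)} \;=\; \ip{A,\PP_T\PP_\Omega\PP_T(B)} .
\]
Subtracting $p\ip{A,B} = \ip{A,p\PP_T(B)}$ and using Cauchy--Schwarz in Frobenius norm together with the operator-norm bound,
\[
\bigl|\ip{\PP_\Omega(A),\PP_\Omega(B)} - p\ip{A,B}\bigr|
\;\leq\; \|A\|_F\,\|(\PP_T\PP_\Omega\PP_T - p\PP_T)(B)\|_F
\;\leq\; p\delta\,\|A\|_F\,\|B\|_F,
\]
and dividing through by $p$ yields \eqref{eq:rsc_chen_2}. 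Both statements therefore hold simultaneously on the single high-probability event produced by Lemma~\ref{lem:Rudelson}, so the probability bound $1 - 3n^{-3}$ is preserved.

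I do not anticipate a substantive obstacle; the whole argument is a two-line bilinear-form manipulation once the Rudelson-type bound of Lemma~\ref{lem:Rudelson} is in hand. The only point requiring a small amount of care is making sure the second claim is a pointwise consequence of the uniform operator-norm bound (avoiding, e.g., a polarization argument that would only deliver an AM--GM weaker bound $\tfrac{\delta}{2}(\|A\|_F^2+\|B\|_F^2)$); the clean route is the direct computation above.
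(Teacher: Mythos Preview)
Your proposal is correct. For \eqref{eq:rsc_chen_1} it coincides with the paper's argument: both rewrite $\|\PP_\Omega(A)\|_F^2 = \ip{A,\PP_T\PP_\Omega\PP_T(A)}$ and invoke the operator-norm bound of Lemma~\ref{lem:Rudelson} via Cauchy--Schwarz; you merely package the upper and lower bounds into a single absolute-value estimate, while the paper treats them separately. For \eqref{eq:rsc_chen_2} the routes genuinely differ. The paper normalizes $A' = A/\|A\|_F$, $B' = B/\|B\|_F$, applies the polarization identity $\ip{\PP_\Omega(A'),\PP_\Omega(B')} = \tfrac{1}{4}\bigl(\|\PP_\Omega(A'+B')\|_F^2 - \|\PP_\Omega(A'-B')\|_F^2\bigr)$, and then feeds both terms into \eqref{eq:rsc_chen_1}; the unit normalization is what converts the resulting $\tfrac{\delta}{2}(\|A'\|_F^2+\|B'\|_F^2)$ into $\delta$, hence into $\delta\|A\|_F\|B\|_F$ after scaling back. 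You instead use the bilinear identity $\ip{\PP_\Omega(A),\PP_\Omega(B)} = \ip{A,\PP_T\PP_\Omega\PP_T(B)}$ directly and apply Cauchy--Schwarz with the operator-norm bound, which is shorter and sidesteps the normalization device entirely. Both approaches yield the same constant and hold on the single event of Lemma~\ref{lem:Rudelson}, so the probability accounting is identical.
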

\begin{proof}
    By Lemma~\ref{lem:Rudelson}, with probability at least $1 - 3 n^{-3}$,
    for any $X \in \R^{n_1 \times n_2}$ it holds that
    \begin{equation}
        \label{eq:rsc_chen_helper}
        p (1-\delta) \norm{X}_F \leq \norm{\PP_T \PP_\Omega \PP_T(X) }_F \leq
        p(1+\delta) \norm{X}_F.
    \end{equation}
   Let $A$ be a matrix in $T$.
   Rewriting $\norm{\PP_\Omega(A)}^2_F =
   \ip{\PP_\Omega \PP_T (A), \PP_\Omega \PP_T (A ) } = \ip{A, \PP_T
     \PP_\Omega \PP_T (A ) }$, and using the Cauchy-Schwarz
   inequality and \eqref{eq:rsc_chen_helper} we can bound
   \begin{equation}
        \label{eq:rsc_chen_helper1}
         \norm{\PP_\Omega(A)}^2_F \leq p(1+\delta) \norm{A}^2_F.
   \end{equation}
   In addition, we have
   \begin{equation}
       \label{eq:rsc_chen_helper2}
    \begin{aligned}
        \norm{\PP_\Omega(A)}^2_F  & = \ip{A, \PP_T \PP_\Omega \PP_T (A ) }\\
                    & = \ip{A, \PP_T \PP_\Omega \PP_T (A ) - p \PP_T(A) + p \PP_T(A) }\\
                    & \geq -\norm{A}_F \norm{ (\PP_T \PP_\Omega \PP_T - p \PP_T)(A)}_F + p \norm{A}^2_F \\
                    & \stackrel{(a)}{\geq} p(1-\delta) \norm{A}^2_F,
    \end{aligned}
   \end{equation}
    where $(a)$ follows from Lemma~\ref{lem:Rudelson}. Combining
    \eqref{eq:rsc_chen_helper1} and \eqref{eq:rsc_chen_helper2} proves~\eqref{eq:rsc_chen_1}.
    To show \eqref{eq:rsc_chen_2}, let $A' = \frac{A}{\norm{A}_F}$ and $B' =
    \frac{B}{\norm{B}_F}$. Both $A' + B'$ and
    $A' - B'$ are in $T$. We have
    \begin{equation}
        \begin{aligned}
        \ip{\PP_\Omega(A'), \PP_\Omega(B')} & = \frac{1}{4} \bigg\{ \overbrace{
            \norm{\PP_\Omega(A' + B')}^2_F }^{\circled{1}} -
        \overbrace{\norm{\PP_\Omega(A' - B')}^2_F}^{\circled{2}} \bigg\} \\
        & \stackrel{(b)}{\leq} \frac{1}{4} \bigg\{ (1+\delta)p\norm{A'+B'}^2_F -
        (1-\delta)p\norm{A' - B'}^2_F \bigg\} \\
        & = \frac{1}{4} \bigg\{ 2\delta p \left( \norm{A'}^2_F +
            \norm{B'}^2_F\right) + 4p \ip{A', B'}    \bigg\} \\
        & = p \delta + p \ip{A', B'},
        \end{aligned}
    \end{equation}
    where $(b)$ follows from \eqref{eq:rsc_chen_1}. Thus, we have
    \begin{equation}
        p^{-1} \ip{\PP_\Omega(A), \PP_\Omega(B)} = p^{-1} \norm{A}_F \norm{B}_F
        \ip{\PP_\Omega(A'), \PP_\Omega(B')} \leq
         \delta \norm{A}_F\norm{B}_F + \ip{A, B}.
    \end{equation}
    Similarly, we can show
    \begin{equation}
        p^{-1} \ip{\PP_\Omega(A), \PP_\Omega(B)} \geq - \delta \norm{A}_F\norm{B}_F + \ip{A, B}.
    \end{equation}

\end{proof}

Last, we want to show the projection onto feasible set $\C$ is a contraction.
\begin{lemma}
    \label{lem:contraction}
    Let $y \in \R^r$ be a vector such that $\norm{y}_2 \leq \theta$, for any $x
    \in \R^r$. Then
    \[ \norm{ \PP_{\norm{\cdot}_2 \leq \theta} (x) - y }^2_2 \leq \norm{x - y}^2_2. \]
\end{lemma}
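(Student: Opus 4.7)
My plan is to argue this as an instance of the standard non-expansiveness of projection onto a closed convex set. The feasible set $B_\theta = \{z \in \R^r : \norm{z}_2 \leq \theta\}$ is a closed convex set, and $\PP_{\norm{\cdot}_2 \leq \theta}$ is the metric projection onto $B_\theta$. Since $\norm{y}_2 \leq \theta$ we have $y \in B_\theta$ and thus $\PP_{B_\theta}(y) = y$. By the standard non-expansive property of Euclidean projection onto a closed convex set, $\norm{\PP_{B_\theta}(x) - \PP_{B_\theta}(y)}_2 \leq \norm{x - y}_2$, which gives the claim.

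If an elementary self-contained argument is preferred, I would proceed by cases. Let $\xbar = \PP_{B_\theta}(x)$. If $\norm{x}_2 \leq \theta$, then $\xbar = x$ and the inequality is trivial. Otherwise $\norm{x}_2 > \theta$ and $\xbar = \theta x / \norm{x}_2$. I would then expand and compare, writing
\[
\norm{\xbar - y}_2^2 - \norm{x - y}_2^2 = \theta^2 - \norm{x}_2^2 + 2\ip{x, y}\Bigl(1 - \tfrac{\theta}{\norm{x}_2}\Bigr),
\]
and observe $1 - \theta/\norm{x}_2 > 0$ together with $\ip{x,y} \leq \norm{x}_2 \norm{y}_2 \leq \theta \norm{x}_2$, so the right-hand side is upper bounded by $\theta^2 - \norm{x}_2^2 + 2\theta\norm{x}_2 - 2\theta^2 = -(\norm{x}_2 - \theta)^2 \leq 0$.

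There is no real obstacle here; this is a one-line application of firm non-expansiveness of convex projection. The only thing to be careful about is to use the hypothesis $\norm{y}_2 \leq \theta$ at the crucial moment (either via $\PP_{B_\theta}(y)=y$ in the abstract argument, or via Cauchy--Schwarz plus $\norm{y}_2 \leq \theta$ in the elementary one). This lemma will be used in the main convergence proof to pass from the contraction bound on the unprojected gradient iterate to a contraction bound on $\PP_\C$ applied row-wise, exploiting Corollary~\ref{coro:proj_const} which guarantees that every row of $\Zbar$ lies in the relevant $\ell_2$ ball.
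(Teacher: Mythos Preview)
Your proposal is correct. The paper's own proof is the elementary case analysis: for $\norm{x}_2 > \theta$ it decomposes $y$ orthogonally along $\xbar = x/\norm{x}_2$ and reduces to the scalar inequality $(\theta - y^\T \xbar)^2 \leq (\norm{x}_2 - y^\T \xbar)^2$, then checks this by sign cases on $y^\T \xbar$. Your second argument is the same idea organized slightly differently (direct expansion plus Cauchy--Schwarz instead of orthogonal decomposition), and arrives at the equivalent bound $-(\norm{x}_2 - \theta)^2 \leq 0$. Your first argument, invoking non-expansiveness of Euclidean projection onto a closed convex set together with $\PP_{B_\theta}(y)=y$, is a cleaner and more general route than what the paper does; it dispatches the lemma in one line and makes clear that nothing special about the $\ell_2$ ball is being used beyond convexity. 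Either version is fine here, and your remark about how the lemma is applied row-wise via Corollary~\ref{coro:proj_const} matches exactly how the paper uses it in the proofs of Lemma~\ref{thm:init} and Lemma~\ref{thm:linear_conv}.
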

\begin{proof}
    If $\norm{x}_2 \leq \theta$, then $  \PP_{\norm{\cdot}_2 \leq \theta} (x) = x$. 
    Otherwise
    $\PP_{\norm{\cdot}_2 \leq \theta} (x) = \theta \xbar$,  where $\xbar = \frac{x}{\norm{x}_2}$.
    Write $y =  ( y^\T \xbar ) \xbar + \PP^\perp_{x}(y)$, we have
    \begin{equation}
        \norm{ \theta \xbar - y }^2_2 = \norm{ \theta \xbar -  (y^\T \xbar) \xbar
        }^2_2 + \norm{\PP^\perp_x (y)}^2_2 = (\theta - y^\T \xbar )^2 +
        \norm{\PP^\perp_x (y)}^2_2.
    \end{equation}
    It suffices to show 
    \begin{equation}
        \label{eq:contraction_helper}
        (\theta - y^\T \xbar )^2 \leq  ( \norm{x} - y^\T \xbar)^2. 
    \end{equation}
    If $y^\T \xbar \leq 0$, then \eqref{eq:contraction_helper} holds because $\norm{x} > \theta$.
    If $y^\T \xbar > 0$, \eqref{eq:contraction_helper} still holds since
    $\norm{x} > \theta \geq \norm{y} \geq y^\T \xbar$.
\end{proof}

\section{Initialization}
\label{sec:proof_init}

\subsection{Proof of Lemma~\ref{thm:init}}
Let $\delta$ denote the upper bound of $\norm{  p^{-1} \PP_{\Omega}(\Xstar)   -
    \Xstar}$ as in Lemma~\ref{lem:Chen_opnorm}, and let $\sigma_1 \geq \ldots \geq \sigma_n$ denote the singular values of
$p^{-1} \PP_{\Omega}(\Xstar)$. By Weyl's theorem, we have
\begin{equation} \abs{\sigma_i - \sstar_i} \leq \delta, \quad i \in [n].    \end{equation}
Note this implies $\sigma_{r+1} \leq \delta$, as $\sstar_{r+1} = 0$.

By definition, $Z^0 = [U; V] \Sigma^{\frac{1}{2}}$, where $U \Sigma V^\T$ is the
rank-$r$ SVD of $p^{-1} \PP_\Omega(\Xstar)$. According to
Lemma~\ref{lem:ZZ_to_X}, one has
\begin{equation}
    \label{eq:init_helper1}
    \begin{aligned}
        \norm{Z^0 {Z^0}^\T - \Zstar {\Zstar}^\T}_F 
        & \leq && 2 \norm{U \Sigma V^\T - \Xstar}_F \\
        & \stackrel{(a)}{\leq} && 2\sqrt{2r} \norm{U \Sigma V^\T - \Xstar} \\
        & \leq && 2\sqrt{2r} \left( \norm{ U \Sigma V^\T - p^{-1} \PP_{\Omega}(\Xstar) } + \norm{  p^{-1} \PP_{\Omega}(\Xstar) - \Xstar} \right) \\
        & \stackrel{(b)}{\leq} && 2\sqrt{2r} \left( \delta + \delta \right) \\
        &  = && 4\sqrt{2r} \delta, \\
    \end{aligned}
\end{equation}
where $(a)$ holds because
$\rank(U \Sigma V^\T - \Xstar)\leq 2r$,
$(b)$ holds since $\norm{ U \Sigma V^\T - p^{-1} \PP_{\Omega}(\Xstar) } = \sigma_{r+1}
\leq \delta $. 

Let $H = Z^0  - \Zbar^0$. We want to bound $d(Z^0, \Zstar)^2 = \norm{H}^2_F$. 
According to \eqref{eq:H'Z_symm}, $H^\T \Zbar^0$ is symmetric and ${Z^0}^\T {\Zbar^0}$ is positive semidefinite.
Hence we can write
\begin{equation}
\begin{aligned}
           & \hskip-30pt \norm{Z^0 {Z^0}^\T  - \Zstar {\Zstar}^\T}^2_F \\
         = &\norm{H {\Zbar^0}^\T + {\Zbar^0} H^\T + H H^\T}^2_F\\
         = & \trace \bigg ( (H^\T H)^2 + 2 (H^\T {\Zbar^0})^2 + 2 (H^\T H) ({\Zbar^0} ^\T {\Zbar^0}) + 4 (H^\T H) (H^\T {\Zbar^0}) \bigg) \\
         = & \trace \bigg( \left(H^\T H + \sqrt{2} H^\T \Zbar^0\right)^2 
             + (4 - 2\sqrt{2}) (H^\T H) (H^\T \Zbar^0) + 2 (H^\T H) ({\Zbar^0 }^\T \Zbar^0) \bigg)\\
         \geq & \trace \left((4-2\sqrt{2}) (H^\T H) (H^\T \Zbar^0) + 2 (H^\T H) ({\Zbar^0}^\T \Zbar^0)      \right) \\
         =  & (4-2\sqrt{2}) \trace \left( (H^\T H) ({Z^0}^\T \Zbar^0) \right) + (2\sqrt{2} - 2)  \norm{H {\Zbar^0}^T}^2_F, 
\end{aligned}
\end{equation}
where in the second line we used that $H^\T \Zbar^0$ is symmetric.
Besides, as ${Z^0}^\T \Zbar^0$ is positive semidefinite,
$(4-\sqrt{2})\trace((H^\T H) ({Z^0}^\T \Zbar^0) )$ is nonnegative. Therefore,
\begin{equation}
    \label{eq:init_helper2}
        \norm{Z^0 {Z^0}^\T - \Zstar {\Zstar}^\T}^2_F \geq (2\sqrt{2} - 2) \norm{ H {\Zbar^0}^\T}^2_F 
          \geq 4(\sqrt{2} - 1)  \sstar_\r  \norm{H}^2_F. \\
\end{equation}
Combining \eqref{eq:init_helper1} and \eqref{eq:init_helper2}, it
follows that 
\begin{equation}
         d(Z^0, \Zstar)^2 \leq \frac{ \norm{ Z^0 Z^0 - \Zstar {\Zstar}^\T}^2_F }{ 4(\sqrt{2} - 1) \sstar_r } 
         \leq 
         \frac{8 r }{ (\sqrt{2} - 1) \sstar_r } \delta^2.
\end{equation}
Therefore, it suffices to show 
\begin{equation}
    \begin{aligned}
         d(Z^0, \Zstar)^2 & \leq \frac{8 r }{ (\sqrt{2} - 1) \sstar_r } \delta^2\\
         & \stackrel{(a)}{=} c \frac{r}{\sstar_r} \left( \frac{\log n}{p} \norm{\Xstar}_\infty + \sqrt{\frac{\log n}{p}} \norm{\Xstar}_{\infty, 2} \right)^2  \\ 
         & \stackrel{(b)}{\leq} c \, r  \frac{{\sstar_1}^2}{\sstar_r} \left( \frac{\mu r \log
                 n}{p (\nmin)}  + \sqrt{\frac{\mu r \log n}{p (\nmin)}} \right)^2  \\ 
         & \leq \frac{1}{16}\sstar_r,
    \end{aligned}
\end{equation}
where in $(a)$ we replaced $\delta$ using Lemma~\ref{lem:Chen_opnorm},
and $(b)$ holds since by our incoherence assumption~\eqref{eq:defn_incoh} we have
\begin{align}
    &\norm{\Xstar}_\infty  = \norm{\Ustar \Sstar {\Vstar}^\T}_\infty 
            \leq \sstar_1 \max_{i,j} \norm{\Ustar_{(i)}} \norm{\Vstar_{(j)}} 
            \leq \sstar_1 \norm{\Ustar}_{2,\infty} \norm{\Vstar}_{2,\infty} 
            \leq \sstar_1 \, \frac{\mu r }{\nmin}, \\
    &\norm{\Xstar}_{\infty,2} = \norm{\Ustar \Sstar {\Vstar}^\T}_{\infty,2}
        \leq \sstar_1 \norm{\Ustar {\Vstar}^\T}_{\infty, 2}
        \stackrel{(c)}{\leq}
        \sstar_1 \sqrt{ \frac{\mu r}{\nmin}}.
\end{align}
Note that for $(c)$ we used $\norm{AB^\T}_{2,\infty} \leq \norm{A}_{2,\infty} \norm{B}$.

Hence, to obtain $d(Z^0, \Zstar)^2 \leq \frac{1}{16}\sstar_r $, it suffices to have
\begin{equation} p \geq \max\set{ \frac{c \mu r^{3/2} \kappa \log n}{\nmin}, \frac{c \mu r^2
\kappa^2\log n }{\nmin} } = \frac{c \mu r^2 \kappa^2\log n }{\nmin}. \end{equation}

Since $\PP_\C$ is just row-wise clipping, by Lemma~\ref{lem:contraction} we have
\[
d(Z^1, \Zstar)^2 \leq \norm{ \PP_\C ( Z^0 ) - \Zstar }^2_F \leq \norm{ Z^0 - \Zstar }^2_F.
\]

\subsection{Proof of Corollary~\ref{coro:proj_const}}
By the incoherence assumption, we have $\norm{\Zstar}_{2, \infty} \leq \sqrt{\frac{\mu r}{\nmin}
    \sstar_1}$, see \eqref{eq:Zbar_2_infty}.
It suffices to show $2 \sigma_1 \geq \sstar_1$. From the above discussion, we can
see that 
\[ \frac{8r}{(\sqrt{2} - 1 )\sstar_r} \delta^2 \leq \frac{1}{16} \sstar_r
    \Rightarrow \delta \leq \frac{1}{16} \sstar_r. \]
By Wely's theorem, we have
$| \sigma_1 - \sstar_1 | \leq \frac{1}{16}\sstar_r$. As a result, $2\sigma_1 \geq \sstar_1$.

\section{Regularity Condition}
\label{sec:proof_regu}
Analogous to the restricted strong convexity (RSC) and restricted strong
smoothness (RSS), we show that with high
probability our objective function $f$ satisfies the local curvature and local
smoothness conditions defined below.

\begin{itemize}
    \item \emph{Local Curvature Condition}

        \vspace{0.3cm}
        There exists constant $c_1, c_2$ such that for any $Z \in \C$ satisfying $d(Z,
        \Zstar) \leq \frac{1}{4} \sqrt{\sstar_r}$,
        \[ \ip{\nabla f(Z), H } \geq c_1 \norm{H}^2_F +  c_2 \norm{H^\T D \Zbar}^2_F. \]

    \item \emph{Local Smoothness Condition}

        \vspace{0.3cm}
        There exist constants $c_3, c_4$ such that for any $Z \in \C$ satisfying $d(Z,
        \Zstar) \leq \frac{1}{4} \sqrt{\sstar_r }$,
        \[ \norm{\nabla f(Z)}^2_F \leq c_3 \norm{H}^2_F + c_4\norm{H^\T D \Zbar}^2_F. \]
\end{itemize}

\subsection{Proof of the Local Curvature Condition}
\begin{equation}
    \begin{aligned}
          & \hspace{-0.5cm}\ip{\nabla f(Z), H} \\
          & = \frac{1}{p} \left(   \sum_{l = 1}^{2m} \ip{A_l, H {\Zbar}^\T + \Zbar H^\T + H H^\T }
          \cdot \ip{ ( A_l + A^\T_l ) (\Zbar + H), H} \right) + \lambda \trace(H^\T \Gamma) \\
          & \stackrel{(i)}{=} \frac{1}{p} \left( \sum_{l = 1}^{2m} \ip{A_l, H {\Zbar}^\T + \Zbar
                  H^\T + H H^\T } \cdot \ip{ A_l, H {\Zbar}^\T + \Zbar H^\T + 2
                  HH^\T } \right)
              + \lambda \trace(H^\T \Gamma) \\
          & =  \frac{1}{p} \bigg\{  \overbrace{ \sum_{l=1}^{2m} \ip{A_l, H {\Zbar}^\T + \Zbar
                  H^\T}^2 }^{a^2}  +  \overbrace{  \sum_{l=1}^{2m}   2 \ip{A_l,
                  H H^\T}^2 }^{b^2}  +  \sum_{l=1}^{2m} 3 \ip{A_l, H {\Zbar}^\T
                  + \Zbar H^\T} \ip{A_l, H H^\T} \bigg\} \\
          & \hspace{0.8cm} + \lambda \trace(H^\T \Gamma) \\
          & \stackrel{(ii)}{\geq} \frac{1}{p} \bigg\{   a^2 + b^2 - \frac{3}{\sqrt{2}}  \overbrace{ \sqrt{ \sum_{l=1}^{2m} \ip{A_l, H {\Zbar}^\T + \Zbar H^\T}^2 } }^a 
                                   \overbrace{ \sqrt{ \sum_{l=1}^{2m} 2 \ip{A_l,
                                               H H^\T}^2 } }^b  \bigg\} + \lambda \trace(H^\T \Gamma) \\
          & = \frac{1}{p} \bigg\{  \left(a - \frac{3}{2\sqrt{2}}b \right)^2 -
              \frac{1}{8}b^2 \bigg\}  + \lambda \trace(H^\T \Gamma)  \\
          & \stackrel{(iii)}{\geq} \frac{1}{p}\left( \frac{a^2}{2} - \frac{5}{4}b^2 \right) + \lambda \trace(H^\T \Gamma) \\
          & = \frac{1}{2} p^{-1} \norm{ \PP_{\OmegaY}( H {\Zbar}^\T + \Zbar H^\T ) }^2_F
                 - \frac{5}{2} p^{-1} \norm{\PP_{\OmegaY}( H H^\T )}^2_F + \lambda \trace(H^\T \Gamma).
    \end{aligned}
\end{equation}
where we used equation~\eqref{eq:trace_symm} for $(i)$,
the Cauchy-Schwarz inequality for $(ii)$, inequality $(a-b)^2 \geq \frac{a^2}{2} -
b^2$ for $(iii)$. Finally, in the last line we used $ \sum_{l=1}^{2m} \ip{A_l, M}^2 = \norm{\PP_{\OmegaY} ( M ) }^2_F  $.

We first lower bound $\frac{1}{2} p^{-1} \norm{ \PP_{\OmegaY}( H {\Zbar}^\T + \Zbar H^\T )
}^2_F$. By the symmetry of $\OmegaY$, it is equal to $ p^{-1} \norm{ \PP_{\Omega}( H_U
    {\Zbar}^\T_V + \Zbar_U H^\T_V ) }^2_F $, which expands to
\begin{equation}
    \begin{aligned}
    p^{-1} \norm{ \PP_{\Omega}( H_U {\Zbar}^\T_V) }^2_F  
            + p^{-1} \norm{ \PP_{\OmegaY}(\Zbar_U H^\T_V ) }^2_F  
            + 2 p^{-1} \ip{ \PP_{\Omega}( H_U {\Zbar}^\T_V) ,  \PP_{\Omega}(\Zbar_U H^\T_V)   }.
    \end{aligned}
\end{equation}
As both $H_U {\Zbar}^\T_V$ and $\Zbar_U H^\T_V$ belong to $T$, we use
Lemma~\ref{lem:rsc_chen} to lower bound above three terms, respectively. This gives us
\begin{equation}
    \begin{aligned}
        & \hskip-30pt \frac{1}{2} p^{-1} \norm{ \PP_{\OmegaY}( H {\Zbar}^\T + \Zbar H^\T ) }^2_F \\
      \geq & \; (1-\delta) \left(  \norm{ H_U {\Zbar}^\T_V }^2_F 
            + \norm{ \Zbar_U H^\T_V }^2_F  \right)
            + 2 \ip{ H_U {\Zbar}^\T_V ,  \Zbar_U
                H^\T_V } - 2 \delta \norm{ H_U {\Zbar}^\T_V }_F \norm{\Zbar_U
                H^\T_V}_F \\
     \geq & \; (1-\delta) \left(  \norm{ H_U {\Zbar}^\T_V }^2_F 
            + \norm{ \Zbar_U H^\T_V }^2_F  \right)
            + 2 \ip{ H_U {\Zbar}^\T_V ,  \Zbar_U H^\T_V } - \delta \left( \norm{ H_U {\Zbar}^\T_V }^2_F 
            + \norm{\Zbar_U H^\T_V}^2_F \right) \\
     \stackrel{(iv)}{\geq} & (1- 2\delta) \sstar_r \left(  \norm{ H_U }^2_F + \norm{H_V }^2_F \right)
                             + 2 \ip{ H_U {\Zbar}^\T_V ,  \Zbar_U H^\T_V }\\
     = & \; (1- 2\delta) \sstar_r \norm{ H }^2_F + 2 \ip{ H_U {\Zbar}^\T_V ,  \Zbar_U H^\T_V }.
    \end{aligned}
\end{equation}
where we used $\norm{ H_U {\Zbar}^\T_V }^2_F \geq \sstar_r \norm{H_U}^2_F$ and
$\norm{ \Zbar_U H^\T_V }^2_F \geq \sstar_r \norm{H_V}^2_F$ for $(iv)$.

Until now, we obtain
\begin{equation}
     \begin{aligned}
         \ip{\nabla f(Z), H} 
          & \geq (1 - 2\delta)\sstar_r  \norm{H}^2_F 
                 + 2 \ip{H_U {\Zbar}^\T_V, \Zbar_U H^\T_V} + \lambda \trace(H^\T \Gamma)
                    - \frac{5}{2} p^{-1} \norm{\PP_{\OmegaY}( H H^\T )}^2_F. 
      \end{aligned}
\end{equation}

Next, we lower bound $2 \ip{H_U {\Zbar}^\T_V, \Zbar_U H^\T_V} + \lambda
\trace(H^\T \Gamma)$ together. Rewriting
\begin{equation}
    \label{eq:curve_helper1}
    \begin{aligned}
    &2 \ip{H_U {\Zbar}^\T_V, \Zbar_U H^\T_V} = \ip{ H,
        \begin{bmatrix}
            0 & \Zbar_U H^\T_V \\
            \Zbar_V H^\T_U & 0 \\
        \end{bmatrix} \Zbar }
    =  \ip{ H , \frac{1}{2} ( \Zbar H^\T -  D \Zbar H^\T D ) \Zbar}, \\
    & ZZ^\T - \Zbar{\Zbar}^\T = HH^\T + \Zbar H^\T + H {\Zbar}^\T,
    \end{aligned}
\end{equation}
and plugging in $\Gamma = D Z Z^\T D Z$, we then have
\begin{align}
    \begin{aligned}
   & && \hskip-30pt 2 \ip{H_U {\Zbar}^\T_V, \Zbar_U H^\T_V} + \lambda \trace(H^\T \Gamma) \\
   & = &&  \ip{H,  \frac{1}{2} ( \Zbar H^\T -  D \Zbar H^\T D  )\Zbar} +  \lambda \ip{ H, D(ZZ^\T - \Zbar {\Zbar}^\T)D Z} 
        + \lambda \ip{ H, D(\Zbar {\Zbar}^\T)D \Zbar} \\
   & && + \lambda \ip{ H, D(\Zbar {\Zbar}^\T)D  H} \\
   & \stackrel{(a)}{=} &&  \ip{H,  \frac{1}{2} ( \Zbar H^\T -  D \Zbar H^\T D   )  \Zbar} +  \lambda \ip{ H, D(ZZ^\T - \Zbar {\Zbar}^\T)D Z} 
                            + \lambda \norm{{\Zbar}^\T D H}^2_F \\
   & \stackrel{(b)}{=} &&   \lambda \norm{ {\Zbar}^\T D H}^2_F +  \ip{H,  \frac{1}{2} ( \Zbar H^\T -  D \Zbar H^\T D  ) \Zbar 
                            + \lambda  D(HH^\T + \Zbar H^\T + H {\Zbar}^\T)D (\Zbar + H) }  \\
   & \stackrel{(c)}{=} &&    \lambda \norm{ {\Zbar}^\T D H }^2_F + \frac{1}{2}
                            \norm{H^\T \Zbar}^2_F
                            + \lambda \norm{H^\T D H}^2_F +  3 \lambda \trace(H^\T D H H^\T D \Zbar) \\
   & &&                     + \left(\lambda - \frac{1}{2}\right) \trace(H^\T D \Zbar H^\T D \Zbar)  \\
   & = &&\frac{\lambda}{2} \norm{ {\Zbar}^\T D H }^2_F +
         \frac{\lambda}{2} \norm{ {\Zbar}^\T D H + 3 H^\T D H }^2_F -\frac{7}{2} \lambda \norm{H^\T D H}^2_F\\
   & &&  + \frac{1}{2} \norm{H^\T Z}^2_F + \left(\lambda - \frac{1}{2}\right) \trace(H^\T D \Zbar H^\T D \Zbar)  \\
   & \geq && \frac{\lambda}{2} \norm{ {\Zbar}^\T D H }^2_F - \frac{7}{2}\lambda
         \norm{H}^4_F + \left(\lambda - \frac{1}{2}\right) \trace(H^\T D \Zbar H^\T D \Zbar)\\
    \end{aligned}
\end{align}
Equality $(a)$ holds because ${\Zbar}^\T D \Zbar = 0$. We plug in
\eqref{eq:curve_helper1} in $(b)$. For $(c)$, we use ${\Zbar}^\T D \Zbar = 0$ and that $H^\T \Zbar$ is
symmetric. 
Finally, we take $\lambda = \frac{1}{2}$ and use Lemma~\ref{lem:PP_HH} to upper bound
$p^{-1} \norm{\PP_{\OmegaY}( H H^\T )}^2_F$:
\begin{equation}
    \begin{aligned}
         \ip{\nabla f(Z), H} 
          & \geq &&  (1-2\delta) \sstar_r \norm{H}^2_F + \frac{1}{4} \norm{ {\Zbar}^\T D H }^2_F - \frac{7}{4} \norm{H}^4_F
                    - \frac{5}{2} (1+\delta) \norm{H}^4_F - \frac{5}{2} \delta \sstar_r \norm{H}^2_F\\
                    & = && \left( (1- 2\delta)\sstar_r - \frac{5}{2}(\frac{17}{10}+\delta) \norm{H}^2_F - \frac{5}{2}\delta \sstar_r \right)  \norm{H}^2_F
                    + \frac{1}{4} \norm{{\Zbar}^\T D H}^2_F. \\
          \end{aligned}
\end{equation}

For simplicity, we take $\delta = \frac{1}{16}$. We also
have $\norm{H}^2_F \leq \frac{1}{16} \sstar_r$. This leads to
\begin{equation}
    \label{eq:local_curvature}
    \begin{aligned}
    \ip{\nabla f(Z), H}
     \geq \frac{227}{512} \sstar_r \norm{H}^2_F + \frac{1}{4} \norm{ {\Zbar}^\T D H}^2_F.
    \end{aligned}
\end{equation}

Note that this lower bound holds with high probability uniformly for all $Z \in \C$ such that $d(Z, \Zstar)
\leq \frac{1}{4} \sqrt{\sstar_r}$, since Lemma~\ref{lem:PP_HH} and \ref{lem:rsc_chen} hold uniformly.

When the ground truth $\Xstar$ is positive semidefinite,
we don't need to do lifitng nor impose the regularizer. Using
Lemma~\ref{lem:rsc_chen}, we can lower bound
 $ \frac{1}{2} p^{-1} \norm{ \PP_{\OmegaY}( H {\Zbar}^\T + \Zbar H^\T ) }^2_F \gtrsim (1-\delta)\sstar_r \norm{H}^2_F$ directly. 
Taking proper constants, we can obtain the standard restricted strong convexity
condition:
\[ \ip{\nabla f(Z), H} \gtrsim \sstar_r \norm{H}^2_F. \]

\subsection{Proof of the Local Smoothness Condition}
To upper bound $\norm{\nabla f(Z)}^2_F = \max_{\norm{W}_F = 1} \abs{ \ip{\nabla f(Z), W} }^2$, 
it suffices to show that for any $n \times r$ $W$ of unit Frobenius norm, 
$\abs{ \ip{\nabla f(Z), W}}^2 $ is upper bounded. We first write
\begin{equation}
    \begin{aligned}
           & \hskip-10pt \ip{\nabla f(Z), W} \\
         = &\; \frac{1}{p}\sum_{l = 1}^{2m}\left( \ip{A_l, H {\Zbar}^\T + \Zbar H^\T} + \ip{A_l, H H^\T } \right) 
                                \cdot \ip{ ( A_l + A^\T_l ) (\Zbar + H), W} + \lambda \trace(W^\T  \Gamma)\\ 
         \stackrel{(i)}{=} & \; \frac{1}{p}\sum_{l = 1}^{2m} \left( \ip{A_l, H {\Zbar}^\T + \Zbar H^\T} + \ip{A_l, H H^\T } \right)
               \left( \ip{A_l, W {\Zbar}^\T + \Zbar W^\T} +  \ip{A_l, W H^\T + H W^\T} \right) \\
           & \hspace{0.2cm} + \lambda \trace(W^\T  \Gamma)  \\
         = & \; \frac{1}{p}\bigg\{ \ip{ \PP_{\OmegaY}( H {\Zbar}^\T + \Zbar H^\T ), \PP_{\OmegaY}( W {\Zbar}^\T + \Zbar W^\T ) } 
                               + \ip{ \PP_{\OmegaY}( H H^\T ), \PP_{\OmegaY}( W {\Zbar}^\T + \Zbar W^\T ) }\\
           & \hspace{0.2cm}  + \ip{ \PP_{\OmegaY}( H {\Zbar}^\T + \Zbar H^\T ),  \PP_{\OmegaY}( W H^\T + H W^\T ) } 
                               + \ip{ \PP_{\OmegaY}( H H^\T ), \PP_{\OmegaY}( W H^\T + H W^\T ) } \bigg\} \\
           & \hspace{0.2cm}  + \lambda \trace(W^\T  \Gamma),  \\
    \end{aligned}
\end{equation}
where we used \eqref{eq:trace_symm} for $(i)$. Since $(a+b+c+d+e)^2 \leq 5(a^2+b^2+c^2+d^2+e^2)$, we have
\begin{equation}
    \begin{aligned}
               & \hskip-10pt \abs{ \ip{\nabla f(Z), W} }^2    \\
         \leq & \; \frac{5}{p^2} \bigg\{ \ip{ \PP_{\OmegaY}( H {\Zbar}^\T + \Zbar H^\T ), \PP_{\OmegaY}( W {\Zbar}^\T + \Zbar W^\T ) }^2 
                  + \ip{ \PP_{\OmegaY}( H H^\T ), \PP_{\OmegaY}( W {\Zbar}^\T + \Zbar W^\T ) }^2 \\
              & + \ip{ \PP_{\OmegaY}( H {\Zbar}^\T + \Zbar H^\T ),  \PP_{\OmegaY}( W H^\T + H W^\T ) }^2 
                  + \ip{ \PP_{\OmegaY}( H H^\T ), \PP_{\OmegaY}( W H^\T + H W^\T ) }^2  \bigg\} \\
              & + 5 \lambda ^2 \trace(W^\T  \Gamma)^2  \\
         \stackrel{(ii)}{\leq} & \frac{5}{p^2} \left( \norm{ \PP_{\OmegaY}( H {\Zbar}^\T + \Zbar H^\T ) }^2_F  
                  + \norm{ \PP_{\OmegaY}(H H^\T) }^2_F  \right)   \\
              & \hspace{0.2cm} \cdot \left( \norm{ \PP_{\OmegaY}( W {\Zbar}^\T + \Zbar W^\T ) }^2_F  
                  + \norm{ \PP_{\OmegaY}( W H^\T + H W^\T ) }^2_F  \right) + 5
              \lambda^2 \norm{ \Gamma}^2_F \overbrace{\norm{W}^2_F}^{=1}  \\
         \stackrel{(iii)}{\leq} & \frac{5}{p} \bigg( 2 \overbrace{ \norm{ \PP_{\OmegaY}( H
                    {\Zbar}^\T) }^2_F }^{\circled{1}}  + 2 \overbrace{ \norm{
                    \PP_{\OmegaY} (\Zbar H^\T ) }^2_F }^{\circled{2}}
                  + \overbrace{\norm{ \PP_{\OmegaY}(H H^\T) }^2_F
                  }^{\circled{3}} \bigg)  \\
              & \hspace{0.2cm} \cdot \frac{1}{p} \bigg( 
                    2 \overbrace{ \norm{ \PP_{\OmegaY}( W {\Zbar}^\T )}^2_F}^{\circled{4}}   
                  + 2 \overbrace{ \norm{ \PP_{\OmegaY}(\Zbar W^\T )   }^2_F}^{\circled{5}}
                  + 2 \overbrace{ \norm{ \PP_{\OmegaY}( W H^\T )      }^2_F}^{\circled{6}}
                  + 2 \overbrace{ \norm{ \PP_{\OmegaY}( H W^\T )      }^2_F}^{\circled{7}}\bigg)\\
              &  + 5 \lambda^2 \norm{ \Gamma}^2_F,
    \end{aligned}  
\end{equation}
where we used the Cauchy-Schwarz inequality for $(ii)$, and $(a+b)^2 \leq 2 (a^2
+ b^2)$ for $(iii)$. We then use Lemma~\ref{lem:upper_PP_Y_AB'} to upper bound
\circled{1}, \circled{2}, \circled{4}, \circled{5}, \circled{6}, \circled{7},
and Lemma~\ref{lem:PP_HH} for \circled{3}. Also since 
$\norm{W}_F =1$, one has
\begin{equation}
    \label{eq:smooth_helper1}
    \begin{aligned}
        &      && \hskip-20pt \abs{ \ip{\nabla f(Z), W} }^2 \\
        & \leq && 5 \left( 8n \norm{ H }^2_F \norm{\Zbar}^2_{2,\infty} + (1+\delta) \norm{H}^4_F + \delta \sstar_r \norm{H}^2_F  \right)
                     \cdot  \left( 8n  \norm{\Zbar}^2_{2, \infty} + 8n \norm{H}^2_{2, \infty} \right) \\
        &      && \hspace{0.2cm} + 5 \lambda^2 \norm{\Gamma}^2_F \\
        &  = && 40n \left( 8n \norm{\Zbar}^2_{2,\infty} + (1+\delta) \norm{H}^2_F + \delta \sstar_r   \right)\norm{ H }^2_F 
                     \cdot  \left( \norm{\Zbar}^2_{2, \infty} + \norm{H}^2_{2, \infty} \right) 
                     + 5 \lambda^2 \norm{\Gamma}^2_F \\
        & \leq  && 400 \mu r \sstar_1 \left( 8 \mu r \sstar_1 + (1+\delta) \norm{H}^2_F + \delta \sstar_r \right)\norm{ H }^2_F 
        + 5 \lambda^2 \norm{\Gamma}^2_F, 
    \end{aligned}
\end{equation}
where in the last line we plugged in $\norm{\Zbar}_{2, \infty} \leq \sqrt{\dfrac{\mu r}{n} \sstar_1}$
and $\norm{H}_{2,\infty} \leq 3 \sqrt{\dfrac{\mu r}{n} \sstar_1}$, i.e. \eqref{eq:Zbar_2_infty} and \eqref{eq:H_2_infty}.

Next, we bound
\begin{equation}
    \label{eq:smooth_helper2}
    \begin{aligned}
        \norm{\Gamma}^2_F 
        & =  \norm{ D(ZZ^\T - \Zbar {\Zbar}^\T)DZ + D \Zbar {\Zbar}^\T D Z}^2_F \\
        & \leq  2 \norm{D(ZZ^\T - {\Zbar}{\Zbar}^\T) D Z}^2_F + 2 \norm{D{\Zbar}{\Zbar}^\T D Z}^2_F \\
        & \stackrel{(a)}{\leq}  2 \norm{ZZ^\T - {\Zbar}{\Zbar}^\T}^2_F \norm{Z}^2  + 2 \norm{\Zbar}^2 \norm{ {\Zbar}^\T D Z}^2_F \\
        & \stackrel{(b)}{=} 2 \norm{HH^\T + \Zbar H^\T + H{\Zbar}^\T}^2_F \norm{Z}^2  + 2 \norm{\Zbar}^2 \norm{ {\Zbar}^\T D H}^2_F \\
        & \leq  6 \left( \norm{HH^\T}^2_F + \norm{\Zbar H^\T}^2_F + \norm{ H{\Zbar}^\T}^2_F \right) \norm{Z}^2  
                  + 2 \norm{\Zbar}^2 \norm{ {\Zbar}^\T D H}^2_F \\
        & \stackrel{(c)}{\leq}  6 \left( \norm{H}^2_F + 2 \norm{\Zbar}^2 \right) \norm{H}^2_F \norm{Z}^2  
                  + 2 \norm{\Zbar}^2 \norm{ {\Zbar}^\T D H}^2_F \\
        &  \stackrel{(d)}{=}   6 \left( \norm{H}^2_F + 4 \sstar_1 \right) \norm{H}^2_F \norm{Z}^2  
                  + 4 \sstar_1 \norm{ {\Zbar}^\T D H}^2_F. \\
    \end{aligned}  
\end{equation}
Inequality $(a)$ holds because $\norm{AB}_F \leq \norm{A}\norm{B}_F$ and $\norm{D} = 1$. To
get $(b)$, for the first term in the 3rd line we expand $ZZ^\T - \Zbar {\Zbar}^\T$, for the second
term we expand $Z = \Zbar + H$ and use ${\Zbar}^\T D  \Zbar = 0$. 
For $(c)$, we use $\norm{AB}_F \leq \norm{A}\norm{B}_F \leq \norm{A}_F
\norm{B}_F$. Last, $(d)$ holds because $\norm{\Zbar}^2 = 2 \sstar_1$.

Finally, we combine \eqref{eq:smooth_helper1} and \eqref{eq:smooth_helper2}.
As before, take $\lambda = \frac{1}{2}$,  $\delta = \frac{1}{16}$, and
$\norm{H}^2_F \leq \frac{1}{16} \sstar_r$, we obtain
 \begin{equation}
    \label{eq:smooth}
    \begin{aligned}
               & \hskip-20pt \norm{\nabla f (Z)}^2_F \\
          \leq & \left\{  400  \mu r \sstar_1  \left( 8 \mu r \sstar_1 +
                           (1+\delta)\norm{H}^2_F  +  \delta \sstar_r \right) 
                      + 30 \lambda^2 \left( \norm{H}^2_F + 4 \sstar_1 \right)
                      \norm{Z}^2  \right \} \norm{H}^2_F \\
               &    + 20 \lambda^2 \sstar_1 \norm{ {\Zbar}^\T D H}^2_F\\
          \stackrel{(a)}{\leq} & \left\{  400  \mu r \sstar_1  \left( 8 \mu r \sstar_1 +
                (1+\delta)\norm{H}^2_F  +  \delta \sstar_r \right) 
                      + \frac{735}{8} \sstar_1 \lambda^2 \left( \norm{H}^2_F + 2 \sstar_1 \right)
                       \right \} \norm{H}^2_F \\
               &    + 20 \lambda^2 \sstar_1 \norm{ {\Zbar}^\T D H}^2_F\\
          \stackrel{(b)}{\leq} &    \left\{  400   \left( 8  + \frac{17}{256}
                +  \frac{1}{16} \right) + \frac{735}{32} \left( \frac{1}{16}+ 2\right)
       \right \}  \mu^2 r^2 {\sstar_1}^2   \norm{H}^2_F 
                  + 5 \sstar_1 \norm{ {\Zbar}^\T D H}^2_F\\
          \leq & \; 3299  \mu^2 r^2 {\sstar_1}^2 \norm{H}^2_F 
                  + 5 \sstar_1 \norm{ {\Zbar}^\T D H}^2_F, \\
    \end{aligned}
\end{equation}
where for $(a)$ we used
$\norm{Z} \leq \norm{H} + \norm{\Zbar} \leq \frac{1}{4}\sqrt{\sstar_r} +
\sqrt{2\sstar_1} \leq \frac{7}{4}\sqrt{\sstar_1}$, for $(b)$ we used 
$\mu, r \geq 1$.

As before, this condition holds uniformly for all $Z$ such that $d(Z, \Zstar)
\leq \frac{1}{4}\sqrt{\sstar_r}$ and satisfying the incoherence condition.

For the case $\Xstar$ is positive semidefinite,  as we don't need
to impose the regularizer, standard restricted strong smoothness condition follows:
\[ \norm{\nabla f(Z)}^2_F \lesssim \sstar_1 \norm{H}^2_F.  \]

\subsection{Proof of Lemma~\ref{thm:rc}}
Rearranging the terms in the smoothness condition \eqref{eq:smooth}, we can further bound
\begin{equation}
    \label{eq:local_smooth}
    \begin{aligned}
      \frac{1}{4} \norm{ {\Zbar}^\T D H}^2_F
      & \geq \frac{ \norm{\nabla f(Z)}^2_F} {20 \mu^2 r^2 \kappa \sstar_1} - \frac{3299}{20} \sstar_r \norm{H}^2_F \\
      & \geq \frac{ \norm{\nabla f(Z)}^2_F} {13196 \mu^2 r^2 \kappa \sstar_1} -
      \frac{128}{512} \sstar_r \norm{H}^2_F.\\
    \end{aligned}
\end{equation}
Combining equation \eqref{eq:local_curvature} and \eqref{eq:local_smooth}, it
follows that
\begin{equation}
    \label{eq:regularity}
        \ip{\nabla f(Z), H} \geq  
     \frac{99}{512} \sstar_r \norm{H}^2_F + \frac{1}{13196 \, \mu^2 r^2 \kappa \sstar_1 } \norm{ \nabla f (Z) }^2_F.
\end{equation}

Finally, by upper bounding the probability that
Lemma~\ref{lem:PP_HH},~\ref{lem:upper_PP_Y_AB'}, or~\ref{lem:rsc_chen} fails,
and the sample probability $p$ these lemmas require,
we conclude that once
\begin{equation}
    p \geq c \max\left(\frac{\mu r \log n}{\nmin}, \frac{\mu^2 r^2 \kappa^2}{\nmin}\right),
\end{equation}
regularity condition \eqref{eq:regularity} holds 
with probability at least $1 - c_1 n^{-c_2}$, where $c, c_1, c_2$ are constants.

\section{Linear Convergence}
\subsection{Proof of Lemma~\ref{thm:linear_conv}}
Let $H^k = Z^k - \Zbar^k$.  Our iterate is $Z^{k+1} = \PP_\C( Z^k - \eta \nabla f(Z^k) ) $.
Since $\PP_\C$ is just row-wise clipping, by Lemma~\ref{lem:contraction} we have
\begin{equation}
    \norm{ \PP_\C \left( Z^k - \frac{\eta}{\sstar_1} \nabla f(Z^k) \right) -  {\Zbar}^k }^2_F  
    \leq \norm{ Z^k - \frac{\eta}{\sstar_1} \nabla f(Z^k)  -  {\Zbar}^k }^2_F.
\end{equation}
It follows that 
\begin{equation}
\begin{aligned}
         & \hskip-30pt \norm{Z^{k+1} - \Zbar^k }^2_F\\
    \leq & \; \norm{ Z^k - \frac{\eta}{ \sstar_1 }\nabla f(Z^k) - \Zbar^k}^2_F \\
      =  & \; \norm{H^k}^2_F + \frac{\eta^2}{ {\sstar_1}^2 }\norm{ \nabla f(Z^k)}^2_F -  \frac{2\eta}{\sstar_1} \ip{ \nabla f(Z^k), H^k} \\
    \stackrel{(a)}{\leq} & \;  \norm{H^k}^2_F + \frac{\eta^2}{{\sstar}^2_1 } \norm{ \nabla f(Z^k)}^2_F 
               - \frac{2\eta}{\sstar_1}\left(\frac{1}{\alpha} \sstar_r \norm{H^k}^2_F 
               + \frac{1}{\beta \sstar_1 }\norm{\nabla f(Z^k)}^2_F \right)\\
      = & \;  \left( 1 - \frac{2\eta}{\alpha \kappa } \right) \norm{H^k}^2_F 
              + \frac{\eta (\eta - 2/\beta)}{ {\sstar}^2_1 } \norm{\nabla f(Z^k)}^2_F\\
      \stackrel{(b)}{\leq}  & \; \left( 1 - \frac{2\eta}{\alpha \kappa } \right) \norm{H^k}^2_F,
\end{aligned}
\end{equation}
where we use the definition of $RC(\epsilon, \alpha, \beta)$ for $(a)$
and $0 < \eta \leq \min\set{ \alpha / 2,  2 / \beta }$ for $(b)$.
Therefore, 
\begin{equation}
    d(Z^{k+1}, \Zstar) = \min_{\Ztilde \in \tS} \norm{Z^{k+1} - \Ztilde}^2_F \leq \sqrt{1 - \frac{2\eta}{\alpha \kappa} }d(Z^k, \Zstar).
\end{equation}

\bibliography{main}
\bibliographystyle{plainnat}
\allowdisplaybreaks

\end{document}